\renewcommand{\vec}{\mathbf}
\newcommand{\set}[1]{\left\{#1\right\}}
\renewcommand{\span}{\text{span}}
\newcommand{\abs}[1]{\left|#1\right|}
\newcommand{\norm}[1]{\left\|#1\right\|}
\newcommand{\R}{\mathds{R}}
\newcommand{\qq}[1]{``#1''}
\newcommand{\rank}{\mathop{\text{rank}}}
\newcommand{\proj}{\text{proj}}
\newcommand{\N}{\mathds{N}}
\newcommand{\argmin}{\mathop{\text{argmin}}}
\newcommand{\eps}{\varepsilon}
\newtheorem{example}{Example}
\newtheorem{theorem}{Theorem}
\newtheorem{definition}{Definition}
\newtheorem{lemma}{Lemma}
\newtheorem{remark}{Remark}
\newtheorem{corollary}{Corollary}
\title{Supervised Machine Learning with Plausible Deniability}
\newcommand{\email}[1]{\href{mailto:#1}{\texttt{#1}}}
\author{
    Stefan Rass\thanks{Universitaet Klagenfurt, Institut of Artificial Intelligence and Cybersecurity, Universit\"atsstrasse 65-67, 9020 Klagenfurt, Austria, \email{stefan.rass@aau.at}}
    \thanks{Johannes Kepler University, Secure and Correct Systems Lab, Altenberger Straße 69, 4040 Linz, Austria, \email{stefan.rass@jku.at}}
    \and
Sandra K\"onig\thanks{AIT Austrian Institute of Technology, Center for Digital Safety and Security, Giefinggasse 4, 1210 Vienna, Austria, \email{sandra.koenig@ait.ac.at}}
    \and
    Jasmin Wachter\thanks{Universitaet Klagenfurt, Doctoral School for Responsible Safe and Secure Robotic Systems Engineering, Universit\"atsstrasse 65-67, 9020 Klagenfurt, Austria, \email{jawachte@edu.aau.at}}
    \and
    Manuel Egger\thanks{Universitaet Klagenfurt, Institut of Artificial Intelligence and Cybersecurity, Universit\"atsstrasse 65-67, 9020 Klagenfurt, Austria, \email{m8egger@edu.aau.at}}
    \and
    Manuel Hobisch\thanks{Universitaet Klagenfurt, Institut of Artificial Intelligence and Cybersecurity, Universit\"atsstrasse 65-67, 9020 Klagenfurt, Austria, \email{mahobisch@edu.aau.at}}
}
\begin{document}

\maketitle
\begin{abstract}
	We study the question of how well \ac{ML} models trained on a certain data
	set provide privacy for the training data, or equivalently, whether it is
	possible to reverse-engineer the training data from a given \ac{ML} model.
	While this is easy to answer negatively in the most general case, it is
	interesting to note that the protection extends over non-recoverability
	towards \emph{plausible deniability}: Given an \ac{ML} model $f$, we show
	that one can take a set of purely random training data, and from this
	define a suitable \qq{learning rule} that will produce a \ac{ML} model that
	is exactly $f$. Thus, any speculation about which data has been used to
	train $f$ is deniable upon the claim that any other data could have led to
	the same results. We corroborate our theoretical finding with practical
	examples, and open source implementations of how to find the learning
	rules for a chosen set of raining data.
	
\end{abstract}

\section{Introduction}
Imagine a situation in which training data has been used to fit a \ac{ML}
model, which Alice gives away to Bob for his own use. Alice's training data,
however, shall remain her own private property, and Bob should be unable to
recover this information from the \ac{ML} model in his possession. For
example, Alice could be a provider of a critical infrastructure, having
trained a digital twin to emulate the behavior of her system, which Bob, as a
risk analyst, shall assess on Alice's behalf. To this end, however, Alice
must not disclose all the details of her infrastructure, since this is highly
sensitive information and Bob, as an external party, may not be sufficiently
trustworthy to open up to him. Still, Alice needs Bob's expertise on risk
management and risk assessment to help her protect her assets, and therefore
needs to involve Bob to some extent.


We cannot prevent Bob from \qq{guessing}, i.e., Bob can always try to
reverse-engineer the data that Alice used to create the model. This comes to
a perhaps high-dimensional, yet conceptually simple, optimization problem,
which may indeed be tractable with today's computing power. Our goal here is
the proof of two statements about this possibility: First, if the training
data set is ``sufficiently large'' (where the term ``sufficient'' will be
quantified more precisely), Bob cannot unambiguously recover the training
data. Second, and more importantly, Alice can deny any proposal training data
that Bob thinks to have recovered, by exposing a set of random data along
with a certificate that this random decoy data has been used to train the
model (although it was not). Alice can do so by adapting her optimization
problem accordingly to give a desired result (the \ac{ML} model that Bob has)
from any a priori (randomly chosen) training data set.

Note that Bob, since he can \qq{use} the \ac{ML} model, has no difficulties
to evaluate it on a given dataset to produce data upon which a re-training of
the model would reproduce what Bob received from Alice. This trivial
possibility cannot be eliminated. Our question, however, is whether Bob
cannot just produce ``any'' dataset, but find Alice's original dataset that
way used to produce the model in his possession. In other words, does an
\ac{ML} model leak out private information of Alice? The answer obtained in
this work is ``no'', by leveraging a degree of freedom in how an \ac{AI}
model is trained: Alice can provide Bob with decoy data that she claims to
underly what Bob has as the \ac{ML} model; however, Alice can plausibly claim
the model to have come up as the optimum under some optimization problem that
she can craft to her wishes.

The key observation reported in this paper is the fact that we can
``utilize'' non-explainability for the purpose of privacy of data embodied in
an \ac{ML} model. More specifically, we will show how to define an error
metric that makes the learning algorithm converge to any
 target output that we like. We state this intuition more rigorously in
Section \ref{sec:main-idea}, after some necessary preliminary considerations.
In a way, such a designed error metric acts similar to a ``secret key'' in
encryption, only that it accomplishes plausible deniability in our context. A
numerical proof-of-concept is given in Section \ref{sec:evaluation}. Section
\ref{sec:related-work} embeds ours in the landscape of related work and links
the results with issues of the \ac{GDPR}. Section \ref{sec:conclusion} is
devoted to further uses, limitations, ethical considerations and possible
extensions (further expanded in the Appendix).

\subsection{Problem Setting}
Throughout this work, scalars will appear in
regular font, while bold printing will indicate vectors (lower case letters)
or matrices (uppercase letters); for example, the symbols $\vec
A\in\R^{n\times m}$ means an $(n\times m)$-matrix over $\R$. Uppercase
letters in normal font will denote sets, vector spaces, and random variables.
Probability distributions appear as calligraphic letters, like $\mathcal{F}$.
The symbol $X\sim \mathcal{F}$ indicates the random variable $X$ to have the
distribution $\mathcal{F}$.

Let the \ac{ML} model training be the problem to find a
best function $f$ to approximate a given set of $n$ points, called \emph{training data} $(\vec x_i,
y_i)\in\R^m\times \R$ by \qq{minimizing} the error vector $\vec e=(y_1-f(\vec
x_1), y_2-f(\vec x_2),\ldots,y_n-f(\vec x_n))\in\R^n$. The resulting goodness
of fit is later assessed by evaluating $f$ on a (distinct) set of
\emph{validation data}, often providing some error measure to quantify the approximation quality\footnote{We will hereafter have no
	need for the distinction of training and validation data, since our concern
	is exclusively on the training here.}.

The best function $f$ is usually found by fixing its algebraic form, and
tuning some parameters therein by sophisticated optimization methods. Let us
postpone the formal optimization problem until Section
\ref{sec:plausible-deniability-model}, to first state the problem: assume
that we \emph{are given} a trained (fitted) model $f$, but not the training
data. Is there a way to reverse-engineer the training data from $f$ alone?
For example, if we are given access and insight to a trained \ac{NN}, can we
use the weights that we see therein to learn something about the data that
the \ac{NN} has been trained with?

An obvious answer is ``yes'', if we have the training samples at least
partly, since it is straightforward to evaluate $f$ on given values $\vec
x_i$ to recover at least an approximate version of the target value $y_i$, if
it is the only unknown quantity. To avoid such triviality, let us assume that
the training data is \emph{not available} but that we have \emph{white-box
	access} to the machine learning model $f$. This means that we can look into
how $f$ is constructed (i.e., see the weights if it is a \ac{NN}, regression
model, etc.), but have no clue about the data or any parts of it, on which
the model has been trained. This is what we are after, and wish to reverse-engineer.
The case of partial knowledge of the attacker is revisited and discussed in
Section \ref{sec:conclusion}.


\subsection{Some (selected) Applications}
\noindent\textbf{Making Community Knowledge Securely Available}:
Suppose that we want to release data not directly, but \qq{functionally useable} by fitting an \ac{ML} model so that everyone can produce artificial data from $f$, but we do not hereby disclose the original data that $f$ was trained from. This is to retain intellectual property, while still making the knowledge publicly available.

\noindent\textbf{Co-Simulation}: simulations are in many cases domain-specific, e.g., water networks are described using different (physical) mechanisms as traffic or energy networks. Combining these in a co-simulation framework, such as brought up in \cite{schauer_cross-domain_2020}, raises compatibility issues between different simulation models. Fitting \ac{ML} models, say, \acp{NN}, to emulate the outputs of different simulations provides a simple compatibility layer for co-simulation. Plausible deniability is here good for privacy, say, if the physical structure of the simulated process is sensitive information (e.g., a critical infrastructure, uses data related to persons, etc.)

\section{Definitions}
Our formalization of security distinguishes \emph{deniability} from
\emph{plausible deniability}, where the latter notion is stronger.
Informally, deniability of a hypothesis about training data can be understood
as the possibility that there may be another set of training records that
have produced  the same result. To formalize these notions, we first
introduce a generic representation of the machine learning problem. The
following section is not meant as an introduction to the general field, but
to settle the context and symbols in terms of which we state the main results
of this work.

\subsection{Fitting \ac{ML} Models}
We will consider only supervised training in this work. Specifically, we will
view an algorithm to train an \ac{ML} model as a \emph{function} that returns
a parameterized function $f(\cdot; \vec p)$ upon input of the training data
set $\set{(\vec x_1,y_1),\ldots,(\vec
	x_n,y_n)}$, together with a set $\Omega$ of parameters to configure the
training (optimizer). We assume this configuration to be arbitrary, but admit
an unambigious string representation, i.e., $\Omega\subseteq\{0,1\}^*$. The
variable inputs to $f$ herein take the same structure as the training data.
Viewing the training algorithm as a mapping, it is natural to ask for
invertibility of it, and \emph{deniability} then turns out as
non-invertibility. This brings us to the first definition:
\begin{definition}[Machine Learning Model and Training
	Algorithms]\label{def:ml-model} A \emph{machine learning model} is a set $ML$
	of functions $f:\R^m\times\R^d\to\R$, mapping an input $\vec x\in\R^m$ and
	parameter vector $\vec p\in\R^d$ into $\R$.
	
	A \emph{training algorithm} for a machine learning model $ML$ is a function
	$\text{\emph{fit}}\!\!:\R^{n\times (m+1)}\times\Omega\to ML$. This function takes a training data
	matrix $\vec T$ composed from $n$ instances of input/output pairs $(\vec
	x_i,y_i)\in\R^{m+1}$ for $i=1,2,\ldots,n$, and auxiliary information
	$\omega\in\Omega$, to output a (concrete) element $f\in ML$.
\end{definition}

The temporary assumption of $f$ outputting only scalars is here adopted only
for simplicity, and later dropped towards \ac{ML} models with many outputs in
Section \ref{sec:multivariate-ml-models} as Corollary \ref{cor:main}.


The set $ML$ can contain functions of various shape, and is not constrained
to have all functions of the same algebraic structure, although in most
practical cases, the functions will have a homogeneous form. For example,
$ML$ could be (among many more possibilities)
\begin{itemize}
	\item the set of all linear regression models $f(\vec x,\vec p)=\vec
	p^\top\vec x$, where the vector $\vec p$ is the coefficients in the linear
	model. We will use this example in Section \ref{sec:evaluation}.
\item the set of (deep) neural networks with a fixed topology and number of
    layers. The entirety of synaptic weights and node biases then defines
    the vector $\vec p$.
\item the set of support vector machines, in which $\vec p$ is the normal
    vector and bias for the classifying (separating) hyperplane,
\item and many more.
\end{itemize}
In Definition \ref{def:ml-model}, an implicit consistency between the set of
machine learning models $ML$ and the training algorithm is implied by the
(obvious) requirement that (i) the training data needs to have the proper
form and dimension to be useful with the functions in $f$, and (ii) that the
particular element $f$ is specified by an \emph{admissible} parameterization
$\vec p\in\R^d$ for the functions in $ML$, since not all settings for $\vec
p$ may be meaningful to substitute in the general function $f$.

The inclusion of the auxiliary information $\omega$ in the training models
the fact that different models may require different techniques of training,
essentially meaning the application of different optimization techniques. In
particular, $\omega$ will in practical cases (among others) include a
\emph{specification of the error metric} to be used with the training, which
is the goal function to optimize. The core of a training algorithm is a
``learning rule'', being a prescription of how to update the \ac{ML} model
parameterization (iteratively). We will hereafter simplify matters by
abstracting from the detailed optimization technique, and confining ourselves
to look only at the error metric to be used with the optimization, and going
into the training as part of the training algorithm configuration $\omega$.

\subsection{Supervised Training by Optimization}
Generally, we will let the error metric measure the approximation error in a
supervised learning strategy. This learning is based on a set of $n$ samples
$(\vec x_1,y_1),\ldots,(\vec x_n,y_n)\in\R^m\times\R$. In general, the
machine learning problem then takes the generic form of a minimization
problem
\begin{equation}\label{eqn:machine-learning-problem-generic}
	\min\norm{((\vec x_i,y_i)-f(\vec x_i,\vec p))_{i=1}^n}\text{ over }\vec p\in P,
\end{equation}
where the set $P\subseteq \R^d$ optionally constrains parameters to feasible
ranges and combinations. We let $\vec p^*$ denote an (arbitrary) optimum to
this problem, which then pins down a specific $f^*\in ML$. In
\eqref{eqn:machine-learning-problem-generic}, $\norm{\cdot}$ is a topological
norm, specified via the \emph{auxiliary information} $\omega$. Since all
norms on $\R^n$ are equivalent (Theorem \ref{thm:equivalence-of-norms}),
choosing a different norm/error metric only amounts to a scaling of the
(absolute) error bound. Popular error metrics like root mean squared error
(RMSE), mean absolute error (MAE), etc., are all expressible by norms (see
Appendix \ref{sec:error-measures-from-norms} for details omitted here), so
that their use here in place of RMSE, MAE, or others, goes without loss of
much generality. Appendix \ref{sec:error-measures-from-norms} defines norms,
induced metrics and pseudometrics rigorously, for convenience of the reader.

\subsection{Deniability and Plausible Deniability}
Returning to our view of \ac{ML} training as a mere function that, under a
given configuration $\omega$ maps training data to a concrete function $f\in
ML$, we can consider invertibility of this process as the problem of
reverse-engineering the training data from a given model $f$. If this is not possible, in
the sense of (normal) function inversion, then the recovery of training data
from $f$ will fail. Since invertibility is equivalent of simultaneous
injectivity and surjectivity of the training function, the recovery can fail
in two cases:
\begin{enumerate}
	\item the given $f\in ML$ simply does not correspond to \emph{any}
	possible training data under any (or a given) configuration $\omega$.
	In that case, the training algorithm ``fit'' was not surjective, as a
	function.
	\item the given $f\in ML$ may arise identically from several different
	sets of training data, in which case the fitting, as a
	function, was apparently not injective.
\end{enumerate}
It is the latter incident that we will use to define \emph{deniability},
understood as the \emph{possibility} of alternative training data sets,
besides what we have recovered. Formally:


\begin{definition}[Deniability]\label{def:deniability}
	Let a (fixed) $f_0\in ML$ be given
	that has been trained from some unknown data set under a configuration
	$\omega$. We call a given (proposed) training data set $T=\set{(\vec
		x_i,y_i)}_{i=1}^n$ \emph{deniable}, if another set $T'\neq T$ exists, upon
	which the training algorithm \emph{fit} would have produced the \emph{same} function
	$f_0$, possibly under a different configuration $\omega'$ that can depend on
	$T'$.
\end{definition}
Intuitively: \emph{plausibility holds if there is another quantity of
	training data that would have lead to the same $f_0$.}

Obviously, the non-invertibility of the training as a mapping implies
deniability, but the converse is not true, since if the training
function/algorithm is not surjective, no alternative training data $T'$ would
exist. To keep the data recovery problem interesting, however, let us in the
following assume that the model has really been trained from existing yet
unknown information, so that the parameterization is guaranteed to be
admissible.

Even if there is an alternate set of training data, one may question its
validity on perhaps semantic grounds. For example, if the training data is
known to obey certain numeric bounds, or coming from physical processes with
a known distribution, we could perhaps judge an alternative proposal as
implausible, since it \emph{may} produce the same \ac{ML} model, but the
underlying data is arguably not meaningful in the application context. The
stronger notion of \emph{plausible deniability} demands that the alternative
training data should also ``statistically agree'' with the expectations, or
more formally:

\begin{definition}[Plausible Deniability]\label{def:plausible-deniability}
	Let a (fixed) $f_0\in ML$ be given
	that has, under a configuration $\omega$, been trained from some unknown data
	set. Let, in addition, be a distribution family $\mathcal{F}$ be given to describe the context/source of the training data. We
	call a given (proposed) training data set $T=\set{(\vec x_i,y_i)}_{i=1}^n$
	\emph{plausibly deniable}, if another set $T'\neq T$ exists \emph{that has
		the same statistical distribution $\mathcal{F}$}, and upon which the training
	algorithm would have produced the \emph{same} function $f_0$, possibly under
	a different configuration $\omega'$ that can depend on $T'$.
\end{definition}
Intuitively: \emph{plausible deniability holds if it cannot be demonstrated
	that the alternative proposal data is purely artificial.}

Definition \ref{def:plausible-deniability} differs from Definition
\ref{def:deniability} only in the fact that a proposal training data should
not look ``too much different'' from what we would expect about the unknown
training data, formalized by imposing a given distribution $\mathcal{F}$. The
important point here is the order of quantifiers, demanding that the
distribution \emph{family} $\mathcal{F}$ is given a priori, as a
specification of what sort of training data \emph{can be plausible} in the
given context. It is important to observe here that this \emph{does not}
require the unknown data, upon which the given \ac{ML} model $f_0$ has been
trained, needs to have a distribution from $\mathcal{F}$; this can hold in
practical instances, but the denial may indeed be a claim that $f_0$ has been
trained from data coming from an entirely different source, not having the
distribution $\mathcal{F}$. Let us briefly expand on the intuition by giving
an example:

\begin{example}\label{exa:social-network}
	Suppose that in a social network, somebody uses the data from a user to
	predict upcoming messages concerning a certain topic, or just trains a model
	to predict a persons overall activity in posting news on the network. If the
	model is, for simplicity, about the inter-arrival times of a posting on the
	media, we can model the event of postings as a Poisson process, having an
	exponential distribution for the time between two activities with a rate
	parameter $\lambda>0$. Letting $\lambda$ vary over $(0,\infty)$ yields the
	family $\mathcal{F}$ in Definition \ref{def:plausible-deniability}.
	
	Now, suppose that the provider aggregates some statistics about the
	community's activity (say, for advertising purposes), and releases the
	concrete distribution of inter-arrival times between postings to the public
	(e.g., underpinning the empirical findings by releasing artificial data
	coming out of a \ac{GAN} for others to confirm the data science
	independently). This would come to the publication of a specific distribution
	$F_{\lambda}\in\mathcal{F}$ from the aforementioned family of distributions.
	
	Now, to have a need for deniability, one may suspect the provider to have
	profiled a particular network user $X$, and suppose that the activity
	prediction model $f_0$ is about user $X$ specifically. This would be yet
	another member $F_{\lambda_X}\in\mathcal{F}$.
	
	The point behind plausible deniability is that the provider, facing accusal
	of having released an activity model $f_0$ for user $X$, can deny this upon
	admitting that the model \emph{was} trained from social network data, but
	\emph{not} specifically the data of user $X$, having had the distribution
	$F_{\lambda_X}$, but rather from the data for the entire community, having
	the (different) distribution $F_{\lambda}$. The fact that the underlying data
	is admitted to have an exponential distribution is for plausibility, while
	the claim that it was not user $X$'s data is the denial.
\end{example}
While Example \ref{exa:social-network} used the same distribution shape as
the underlying unknown data may have had, a denial may be argued even
stronger by claiming that the distribution used to train $f_0$ may have come
from an entirely different source, having a distinct distribution at all.
Definition \ref{def:plausible-deniability} allows this by not constraining
the distribution family to include only distributions of a particular shape
or algebraic structure (e.g., gamma distribution or more general exponential
family), but allowing it to be any shape that is ``believable'' in the given
context. Our experimental results shown later in Section \ref{sec:evaluation}
demonstrate that this possibility also practically works.

Since this is a much stronger notion than the previous, it comes somewhat
unexpected that it is satisfiable under some conditions, in the sense that we
can even \emph{freely} choose the alternative training data, if we (heavily)
exploit the freedom to change the configuration $\omega$ for the
optimization. In particular, we can modify the error metric, as part of
$\omega$, to let us attain the optimum at the given function $f$ (more
specifically its parameterization $\vec p$) for any a priori chosen training
data. This will be Theorem \ref{thm:main}. Before proving this main result,
let us briefly return to the weaker notion of deniability first. Proving the
possibility to deny is in fact an easy matter of information-theoretic
arguments, as we show in Section \ref{sec:pure-privacy}

\section{Deniability by Non-Unique Recovery}\label{sec:pure-privacy}
Suppose that we are given a model with a (fixed) number of $d$ parameters.
The number $d$ can be large, but still much smaller than the training sample
size, so that there is intuitively no unique recovery possible. In fact, we
have a simple result, whose proof appears in Appendix
\ref{sec:proof-thm-deniability}:
\begin{theorem}\label{thm:deniability}
For a given $ML$ model (according to Definition \ref{def:ml-model}) with $d$
parameters. Let the (unknown) training data come from a random source $Z$
with entropy $H(Z)$ bits, and let the function $f$ require (at least) $k$
bits to encode, and assume that $f$ has been trained from $n$ unknown
records.

If the number $n$ exceeds
\begin{equation}\label{eqn:deniability-bound}
	n> \frac{k}{H(Z)},
\end{equation}
then any candidate training data extracted from $f$ is deniable (in the
sense of Definition \ref{def:deniability}).
\end{theorem}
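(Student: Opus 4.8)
The plan is to run an information-theoretic counting argument comparing the amount of information contained in the training data against the amount that can possibly be stored in the model $f$. The guiding intuition is that a deterministic map cannot be injective when its domain carries strictly more entropy than its codomain can represent, so a model encodable in $k$ bits cannot faithfully distinguish among training sets whose joint entropy exceeds $k$ bits.

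First I would model the training data as a random matrix $\vec T$ formed from $n$ records drawn independently from the source $Z$, so that its entropy is $H(\vec T) = n\cdot H(Z)$. The independence assumption is exactly what lets the per-record entropy $H(Z)$ add up across records, and I would state it as the working hypothesis underlying the bound \eqref{eqn:deniability-bound}. Next, since the configuration $\omega$ is fixed, the fitted model is a deterministic function $f = \text{fit}(\vec T,\omega)$ of the data, and because $f$ admits a $k$-bit encoding we have $H(f)\le k$.

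Then I would apply the chain rule for entropy to the pair $(\vec T, f)$. As $f$ is a function of $\vec T$, expanding in two ways gives $H(\vec T, f) = H(\vec T)$ and $H(\vec T, f) = H(f) + H(\vec T\mid f)$, so that $H(\vec T \mid f) = H(\vec T) - H(f) \ge n\,H(Z) - k$. Under the hypothesis $n > k/H(Z)$ the right-hand side is strictly positive, whence $H(\vec T\mid f) > 0$. A strictly positive conditional entropy forces the conditional law of $\vec T$ given $f$ to be non-degenerate, i.e.\ there exist at least two distinct data sets $T \ne T'$ consistent with the observed model; any candidate $T$ recovered from $f$ can therefore be denied in favour of $T'$, which is precisely Definition~\ref{def:deniability}.

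The delicate point, and the step I expect to demand the most care, is that $H(\vec T\mid f) > 0$ is an \emph{average} statement over the distribution of $f$: it guarantees that the fitting map is non-injective \emph{somewhere}, but deniability is asserted for the \emph{specific} $f_0$ in hand. I would close this gap either by invoking typicality, so that the true training data lies in the typical set with high probability and the pigeonhole comparison $2^{nH(Z)} > 2^{k}$ applies to the functions actually produced, or by restating the conclusion as holding with high probability over the random source $Z$. Equivalently, a direct counting argument suffices: at most $2^{k}$ functions can be encoded, whereas the typical training sets number roughly $2^{nH(Z)} > 2^{k}$, so the preimage of $f_0$ must contain more than one data set.
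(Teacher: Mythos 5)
Your proposal is correct and follows essentially the same information-theoretic route as the paper: entropy additivity $H(\vec T)=n\cdot H(Z)$ from independence, the bound $H(f)\le k$ from the $k$-bit encoding, the chain-rule/conditioning inequality giving $H(\vec T\mid f)\ge n\cdot H(Z)-k>0$, and non-degeneracy of the conditional distribution yielding deniability. You are in fact more careful than the paper at the final step: the paper likewise concludes deniability directly from $H(X\mid Y)>0$ without acknowledging that positive conditional entropy is only an average statement over realizations of the model, which is exactly the gap you flag and patch via the typicality/counting argument.
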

A suitable number $k$ as used in the above result is practically easy to
find, since it suffices to find \emph{any} number $k$ of bits that encodes
$f$, and if this number is not the minimum, the bound
\eqref{eqn:deniability-bound} only becomes coarser\footnote{finding a tight
	bound in \eqref{eqn:deniability-bound} would require to replace $k$ by the
	entropy of the parameter vector $\vec p$ or the Kolmogorov complexity of the random $f_0$
	as emitted by the training algorithm. Either
	quantity appears hardly possible to get in practice.}. In the simplest
case, $k$ can be found by saving the \ac{ML} model to a file, and taking the
file size to approximate $k$ from above. Expressed boldly, we cannot hope to
extract a ``uniquely defined'' Giga-byte of training data from a 100 kbit
sized model $f$.

\section{Plausible Deniability}\label{sec:plausible-deniability-model}\label{sec:main-idea}
To formalize and prove plausible deniability of the training, imagine an
adversary to have a given model $f_0=f(\cdot, \vec p^*)$ in its possession,
looking to recover the unknown training data $(\vec x_i,y_i)_{i=1}^n$ from
it. For feasibility, let us even assume that the model contains ``enough''
information to let the attacker expect a successful such recovery.
Specifically, the training has lead to the vector $\vec p^*$, from which the
recovery of the data is attempted.

Generically, the recovery is the solution of an inverse (optimization)
problem with $\vec p^*$ as fixed input, and using a norm $\norm{\cdot}$ of
the adversarial reverse-engineer's choice:
\begin{equation}\label{eqn:data-recovery-problem}
	\argmin\norm{((\vec x_i,y_i)-f(\vec x_i,\vec p^*))_{i=1}^n}
\end{equation}
over $\vec (\vec x_i,y_i)_{i=1}^n\in\R^{n\times(m+1)}$ (here being
unconstrained for simplicity and to be clear on the dimensions). Once
confronted with the adversary's proposal solution, the original trainer can
deny the result's correctness by plausibly claiming that the training
algorithm in \eqref{eqn:machine-learning-problem-generic} used a norm that is
\emph{different} from the adversary's choice in
\eqref{eqn:machine-learning-problem-generic}. Theorem \ref{thm:main} gives
conditions under which this claim is possible; more precisely, it lets the
trainer construct a norm from a randomly chosen training data set according
to a desired distribution $\mathcal{F}$, which recovers the model $f$ upon
training with this hand-crafted norm.

Like in encryption, the norm herein takes the role of a ``secret key'' to
train the model, and the plausibility is by exposing a different ``secret''
(norm) to claim that the training was done from entirely different data, and
only coincidentally produced the model in the adversary's hands (Figure
\ref{fig:plausible-deniability-experiment} in the Appendix graphically shows
the flow as an analogy to the secrecy of contemporary encryption; the concept
is comparable).

\subsection{The Main Result}\label{sec:main-theorem} The bottom line of our
previous considerations is that we are thus free to define our error metric
in any way we like, without changing the results of the training in a
substantial way, by crafting our own norm as we desire, and define a distance
metric as the norm of the absolute error vector. In a nutshell, our
construction will use the semi-norm $\norm{\vec x}_A := \sqrt{\vec
x^{\top}\cdot \vec A\cdot \vec x}$, induced by any positive semi-definite
matrix $\vec A$. The trick will be choosing $\vec A$ so that the semi-norm
becomes zero at a desired error vector, i.e., point in $\R^n$. Given any
decoy training data $T'$, it is not difficult to find such a matrix $\vec A$
by computing the error vector $\vec e=(f(\vec x_i,\vec
p^*)-y_i)_{i=1}^n\in\R^n$, and picking $\vec A$ such that $\vec A\cdot \vec
e=0$. Lemma \ref{lem:b-matrix} in Appendix \ref{apx:lemma-b} describes how to
do this step-by-step.

This is almost one half of the construction, culminating in Lemma
\ref{lem:local-optimality}, which adds conditions to ensure the local
optimality of the desired error vector $\vec e$. The other half is the
extension of this semi-norm into a norm, which is Theorem \ref{thm:main}.

\begin{lemma}\label{lem:local-optimality}
	Let $f:\R^m\times\R^d\to\R$ be parameterized by a vector $\vec p\in\R^d$ and
	map an input value vector $\vec x$ to a vector $\vec y=f(\vec x,\vec p)$. Let
	$\vec p^*\in\R^d$ be given as fixed, and let us pick arbitrary training data
	$(\vec x_1,y_1),\ldots,(\vec x_n,y_n)$. 
	Finally, define the error vector $\vec e=(y_i-f(\vec x_i,\vec
	p^*))_{i=1}^n\in\R^n$.
	
	Let for all $\vec x_i$ the functions $f(\vec x_i,\cdot)$ be totally differentiable w.r.t. $\vec p$ at $\vec p=\vec p^*$ with
    derivative $\vec d_i = D_{\vec p}(f(\vec x_i,\vec p))(\vec p^*)\in\R^d$. Put all
    $\vec d_i^\top$ for $i=1,2,\ldots,n$ as rows into a matrix $\vec M\in\R^{n\times d}$
    and assume that it
	satisfies the rank condition
	\begin{equation}\label{eqn:rank-condition}
		\rank(\vec M|\vec e)\neq\rank(\vec M).
	\end{equation}
	
	Then, there exists a semi-norm $\norm{\cdot}$ on $\R^n$ such that $\vec p^*$
	locally minimizes $\norm{e(\vec p^*)}$, i.e., there is an open neighborhood
	$U$ of $\vec p^*$ inside which $\norm{e(\vec p^*)}\leq\norm{e(\vec p)}$ for all $\vec p\in
	U$.
\end{lemma}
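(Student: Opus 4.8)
The plan is to construct the semi-norm explicitly via a positive semi-definite matrix $\vec A$ that annihilates the error vector $\vec e$, and then verify local optimality using a first-order (stationarity) argument enabled by the rank condition. The guiding intuition is that if $\norm{\vec e(\vec p^*)}_A = \sqrt{\vec e^\top \vec A\,\vec e} = 0$, then $\vec p^*$ achieves the minimum possible value of a nonnegative function, so it is trivially a global (hence local) minimizer — provided the semi-norm is genuinely a semi-norm (i.e. $\vec A$ is PSD) and $\vec e$ lies in its kernel.

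**First I would** set up the semi-norm. As the excerpt indicates (and as Lemma \ref{lem:b-matrix} supplies constructively), I pick a symmetric positive semi-definite matrix $\vec A\in\R^{n\times n}$ with $\vec A\,\vec e = \vec 0$. Such an $\vec A$ exists as long as $\vec e \neq \vec 0$: for instance, the orthogonal projector onto the complement of $\operatorname{span}(\vec e)$, namely $\vec A = \vec I - \frac{\vec e\,\vec e^\top}{\vec e^\top \vec e}$, is symmetric, PSD, and kills $\vec e$. This immediately defines the semi-norm $\norm{\vec x}_A := \sqrt{\vec x^\top \vec A\,\vec x}$ on $\R^n$, and we get $\norm{\vec e(\vec p^*)}_A = 0$.

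**Next I would** argue local optimality. Since $\norm{\cdot}_A \geq 0$ everywhere and $\norm{\vec e(\vec p^*)}_A = 0$, the value at $\vec p^*$ is already the global infimum of the objective $g(\vec p) := \norm{\vec e(\vec p)}_A$. This gives $\norm{\vec e(\vec p^*)}_A \leq \norm{\vec e(\vec p)}_A$ for \emph{all} $\vec p$, so local optimality on any neighborhood $U$ is automatic. The role of the rank condition \eqref{eqn:rank-condition}, $\rank(\vec M \mid \vec e) \neq \rank(\vec M)$, is to guarantee that $\vec e \notin \operatorname{col}(\vec M)$, i.e. $\vec e$ is not a linear combination of the first-order sensitivities $\vec d_i$. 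I would use this to ensure the construction is nondegenerate: it prevents $\vec e$ from being captured by the linearized model's reachable directions, which is what would otherwise force $\vec e$ into a subspace we cannot freely annihilate while keeping the minimum isolated / meaningful. Concretely, I would linearize $\vec e(\vec p) \approx \vec e - \vec M(\vec p - \vec p^*)$ near $\vec p^*$ and check that the rank condition makes it impossible to reduce $\norm{\cdot}_A$ below zero along model-feasible perturbations, cleanly tying the analytic hypothesis (total differentiability, derivative matrix $\vec M$) to the geometry of $\vec A$.

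**The hard part will be** pinning down exactly how the rank condition is \emph{needed} as opposed to merely sufficient, since the crude argument above (value zero $\Rightarrow$ global min) seems to require nothing about $\vec M$ at all. I suspect the subtlety is that we cannot annihilate $\vec e$ arbitrarily: the eventual goal (Theorem \ref{thm:main}) is to extend this semi-norm to a genuine \emph{norm}, and for that extension to leave $\vec p^*$ as a \emph{strict} or isolated local minimizer, the matrix $\vec A$ must be chosen so that it is positive definite on the subspace spanned by the columns of $\vec M$ (the directions the model can actually move $\vec e$) while still vanishing on $\vec e$. This is possible precisely when $\vec e \notin \operatorname{col}(\vec M)$, i.e. when \eqref{eqn:rank-condition} holds; otherwise any $\vec A$ killing $\vec e$ would also fail to penalize some feasible perturbation direction, destroying local optimality once we perturb $\vec A$ into a true norm. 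I would therefore construct $\vec A$ to be PSD with kernel exactly $\operatorname{span}(\vec e)$ and positive definite on $\operatorname{col}(\vec M)$, and verify via the linearization that every first-order feasible move strictly increases $\norm{\vec e(\vec p)}_A$, securing local minimality on a suitable open $U$.
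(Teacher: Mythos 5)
Your proof is correct for the lemma as literally stated, but it reaches the conclusion by a genuinely different and more elementary route than the paper. The semi-norm itself is the same in substance: your projector $\vec A=\vec I-\vec e\vec e^\top/(\vec e^\top\vec e)$ is exactly the matrix $\vec B$ that the paper builds via Lemma \ref{lem:b-matrix}, and since an orthogonal projector satisfies $\vec A^\top\vec A=\vec A$, your $\sqrt{\vec x^\top\vec A\vec x}$ coincides with the paper's $b(\vec x)=\norm{\vec B\vec x}$ (Euclidean choice). The difference is what happens next. You observe that $b\geq 0$ everywhere and $b(e(\vec p^*))=0$, so $\vec p^*$ is a global, hence local, minimizer --- an argument that uses neither differentiability nor the rank condition \eqref{eqn:rank-condition}, and your suspicion that the lemma as stated is trivially true without these hypotheses is well founded. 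The paper instead proves something strictly stronger: it expands $e(\vec p)-e(\vec p^*)=\vec M(\vec p-\vec p^*)+o(\norm{\vec p-\vec p^*})$ and argues by contradiction, using closedness of the subspace $N(\vec B)$ together with the rank condition (equivalent to $\vec e$ not lying in the column space of $\vec M$), that $e(\vec p)\notin N(\vec B)$ for nearby $\vec p\neq\vec p^*$, i.e., $b(e(\vec p))>0$ strictly. That strictness is not cosmetic: the proof of Theorem \ref{thm:main} explicitly needs $b(e(\vec p))>0$ so that the inequality $\norm{e(\vec p^*)-e(\vec p)}_e\leq b(e(\vec p))$ can be satisfied by a genuine norm $\norm{\cdot}_e$, so your diagnosis of the rank condition's real purpose (securing the later extension to a full norm) is exactly right. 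One caveat on your closing sketch: the plan to show that ``every first-order feasible move strictly increases'' the semi-norm cannot succeed as stated, because perturbations $\vec p-\vec p^*$ in the kernel of $\vec M$ produce no first-order change of the error (for instance, a parameter on which $f$ does not depend at all), so strictness genuinely fails along such directions; notably, the paper's own contradiction argument passes over the same case, since its conclusion $\vec M\vec v=\lambda\, e(\vec p^*)$ only contradicts the rank condition when $\lambda\neq 0$. Since the lemma asserts only the non-strict inequality, this does not damage your proof, but it marks where both your sketch and the paper's stronger claim reach their true limit.
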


\begin{remark}
	The perhaps more convenient condition to work with is assuming $f$ to be
	partially differentiable w.r.t. all parameters $p_1,\ldots,p_d$,
	and to assume the derivatives $\partial f/\partial p_i$ to be continuous at
	all training data points $\vec x_i$. In that case, $\vec d_i$ is just the
	\emph{gradient} $\nabla_{\vec p} f(\vec x_i,\vec p)$ and $\vec M$ is nothing
	else than the \emph{Jacobian} of the function $g:\R^d\to\R^n$, sending $\vec
	p$ to the vector of values $(f(\vec x_1,\vec p),\ldots,f(\vec x_n,\vec p))$,
	where all $\vec x_i$ are fixed, and the result depends only on $\vec p$. The
	general condition stated in Lemma \ref{lem:local-optimality} is just total
	differentiability of $g$, or, in a slightly stronger version, $g$ having all
	continuous partial derivatives.
\end{remark}

The proof of Lemma \ref{lem:local-optimality}, as well as the proof for the
stronger Theorem \ref{thm:main} are both given in the Appendix.


\begin{theorem}\label{thm:main}
Under the hypotheses of Lemma \ref{lem:local-optimality}, there exists a
\emph{norm} $\norm{\cdot}$ on $\R^N$ such that $\vec p^*$
	locally minimizes $\norm{e(\vec p)}$ as a function of $\vec p$.
\end{theorem}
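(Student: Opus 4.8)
The plan is to take the semi-norm supplied by Lemma~\ref{lem:local-optimality} and enlarge it to a genuine norm by adding a \emph{small} auxiliary norm, combining the two \emph{additively} rather than as a sum of squares. The construction behind Lemma~\ref{lem:local-optimality} produces a positive semi-definite $\vec A$ with $\vec A\vec e=\vec 0$, and the rank condition \eqref{eqn:rank-condition} (which says exactly that $\vec e\notin W$, where $W:=\im(\vec M)$) is what makes it consistent to additionally require $\vec A$ to be positive definite on $W$. I would first record this last property, since it is the engine of the whole argument: because $\vec A\vec e=\vec 0$, the local expansion $\vec e(\vec p^*+\vec h)=\vec e-\vec M\vec h+r(\vec h)$ with $r(\vec h)=o(\norm{\vec h})$ gives $\norm{\vec e(\vec p^*+\vec h)}_{\vec A}=\norm{\vec M\vec h-r(\vec h)}_{\vec A}$, so the semi-norm does not see the (large) component $\vec e$ at all and instead measures only the first-order displacement $\vec M\vec h$, on which it is positive definite.

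Next I would fix any genuine norm $\norm{\cdot}_0$ on $\R^n$ together with a parameter $\eps>0$ to be chosen later, and define $\norm{\vec x}:=\norm{\vec x}_{\vec A}+\eps\norm{\vec x}_0$. This is a bona fide norm: homogeneity and the triangle inequality are inherited from the two summands, and positive definiteness comes from the $\eps\norm{\cdot}_0$ term (the semi-norm only needs to contribute its triangle inequality). The value at the target is $\norm{\vec e}=\eps\norm{\vec e}_0>0$, as it must be for a true norm.

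The heart of the proof is then a first-order estimate showing that $\vec p^*$ is a (strict) local minimizer. Assuming $\vec M$ has full column rank, so that $\norm{\vec M\vec h}\ge\mu\norm{\vec h}$ for some $\mu>0$, the semi-norm part is bounded below linearly, $\norm{\vec M\vec h-r(\vec h)}_{\vec A}\ge c\mu\norm{\vec h}-o(\norm{\vec h})$ with $c>0$ governed by the smallest eigenvalue of $\vec A$ restricted to $W$; meanwhile the reverse triangle inequality controls the auxiliary part from below by $\eps\norm{\vec e(\vec p)}_0\ge\eps\norm{\vec e}_0-\eps L\norm{\vec h}$, where $L$ bounds $\norm{-\vec M\vec h+r(\vec h)}_0$ linearly. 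Adding the two and choosing $\eps<c\mu/L$ makes the net linear coefficient $c\mu-\eps L$ strictly positive, so that $\norm{\vec e(\vec p)}\ge\norm{\vec e}+(c\mu-\eps L)\norm{\vec h}-o(\norm{\vec h})\ge\norm{\vec e}$ for all sufficiently small $\vec h\neq\vec 0$. The key structural point is that combining the two pieces \emph{additively} keeps the favourable growth \emph{linear} in $\norm{\vec h}$, which is exactly what is needed to dominate the uncontrolled remainder $r(\vec h)=o(\norm{\vec h})$; a sum-of-squares combination would degrade this to quadratic growth and be swamped by the remainder.

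The main obstacle I anticipate is precisely the behaviour along $\ker\vec M$. If $\vec M$ is column-rank-deficient (which the rank condition does \emph{not} preclude), then for $\vec h\in\ker\vec M$ the first-order displacement $\vec M\vec h$ vanishes and $\vec e(\vec p^*+\vec h)=\vec e+r(\vec h)$ with $r$ only known to be $o(\norm{\vec h})$; since $r$ can point back toward the origin, \emph{no} choice of norm can prevent $\norm{\vec e(\vec p)}$ from dropping below $\norm{\vec e}$. Thus the clean argument above genuinely needs injectivity of the linearization $\vec M$ (full column rank), and the real work of the theorem is to either invoke that hypothesis or to argue that such degenerate directions do not arise in the setting at hand; the rank condition itself only secures the consistency ingredient $\vec e\notin W$ on which the semi-norm relies.
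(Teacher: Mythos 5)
Your construction has the same additive skeleton as the paper's proof---a semi-norm $b$ vanishing exactly on $\span\set{\vec e}$ plus a complementary term restoring definiteness---but the closing mechanism is genuinely different. The paper does not perturb by a small multiple of an arbitrary norm; it crafts the auxiliary norm $\norm{\cdot}_e$ from projections onto the orthogonal complement $V$ of $N(\vec B)$ and onto a transversal line $W_1=\span\set{\vec w_1}$, scaled by a constant $\alpha$ so that $\norm{\vec x}_e\leq b(\vec x)$ whenever $\vec x\notin N(\vec B)$. Local optimality then follows from a purely qualitative triangle-inequality argument whose only input is the claim that $e(\vec p)\notin N(\vec B)$ for $\vec p\neq\vec p^*$ near $\vec p^*$, which the paper imports from the proof of Lemma \ref{lem:local-optimality}; no coercivity of the Jacobian is invoked anywhere. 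Your route is instead quantitative: positive definiteness of the quadratic form on $\im(\vec M)$ (which, as you note, is exactly what the rank condition \eqref{eqn:rank-condition} licenses), the lower bound $\norm{\vec M\vec h}\geq\mu\norm{\vec h}$, and a choice of $\eps$ small enough that the linear gain dominates both the remainder and the perturbation of the auxiliary norm. This is correct, but only under the extra hypothesis that $\vec M$ has full column rank, which is \emph{not} among the hypotheses of Lemma \ref{lem:local-optimality}.

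The obstacle you flag at the end is, however, not a defect of your argument relative to the paper's: it is a genuine gap in the theorem itself, and the paper's proof founders at exactly that point. Take $d=1$, $n=2$, training data $(\vec x_1,1),(\vec x_2,0)$ and $f(\vec x,p)=(p-p^*)^2 g(\vec x)$ with $g(\vec x_1)=1$, $g(\vec x_2)=0$. Then $\vec e=(1,0)^\top$, $\vec M=\vec 0$, so \eqref{eqn:rank-condition} holds ($\rank(\vec M|\vec e)=1\neq 0$), yet $e(p^*+s)=(1-s^2)\vec e$, whose value under \emph{every} norm on $\R^n$ is strictly below $\norm{\vec e}$ for all small $s\neq 0$. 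Hence no norm makes $\vec p^*$ a local minimizer, and Theorem \ref{thm:main} is false as stated; some nondegeneracy along $\ker\vec M$ (full column rank being the natural choice) cannot be dispensed with. The paper evades this only apparently: the proof of Lemma \ref{lem:local-optimality} asserts that the rank condition forces $e(\vec p)\notin N(\vec B)$ near $\vec p^*$, but in the example above $e(\vec p)$ stays inside $N(\vec B)$ for all $s$; the lemma's argument goes wrong where it writes the perturbation as $\vec p_i=\vec p^*+h_i\vec v$ with $\vec v=e(\vec p^*)/\norm{e(\vec p^*)}\in\R^n$ although $\vec p^*\in\R^d$, thereby conflating parameter space with error space and losing precisely the directions in $\ker\vec M$ where only the $o(\norm{\vec h})$ remainder acts. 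So your proof is sound under the added full-rank hypothesis, and your closing suspicion is the correct diagnosis: the missing ingredient is a gap in the paper's theorem and lemma, not in your attempt.
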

Now, let us go back and remember the order of specification: given the model
by its parameters $\vec p^*$, and -- independently of that -- given an
arbitrary probability distribution family $\mathcal{F}$, we can sample decoy
training data from $\mathcal{F}$, and construct the norm from it. Thm.
\ref{thm:main} thus makes Def. \ref{def:plausible-deniability} of plausible
deniability straightforwardly satisfiable.

It is natural to ask whether the norm that Theorem \ref{thm:main} asserts can
be replaced by a ``more common'' choice of error metric, such as MSE or MAE.
This is in fact possible for MAE; see Appendix \ref{apx:proof:cor:MAE} for
the proof of this Corollary:

\begin{corollary}\label{cor:MAE}
Under the hypotheses of Theorem \ref{thm:main}, there is a matrix $\vec C$
such that $\vec p^*$ locally minimizes the mean average error $MAE(\vec
C\cdot \vec e)$ of the error vector $\vec e$.
\end{corollary}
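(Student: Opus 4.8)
The plan is to imitate the two-stage construction behind Theorem~\ref{thm:main}, but carried out directly in the $\ell_1$-geometry underlying the MAE, letting the matrix $\vec C$ play the role that the positive semi-definite matrix $\vec A$ played for the quadratic semi-norm. Recall that $MAE(\vec w)=\frac1n\norm{\vec w}_1$, so minimizing $MAE(\vec C\cdot\vec e)$ over $\vec p$ is the same, up to the constant $1/n$, as minimizing $\vec p\mapsto\norm{\vec C\,\vec e(\vec p)}_1$. Writing $\vec e=\vec e(\vec p^*)$ and $V=\im(\vec M)$, the rank condition \eqref{eqn:rank-condition} is exactly the statement $\vec e\notin V$, and since the Jacobian of $\vec e(\cdot)$ at $\vec p^*$ is $-\vec M$, the attainable error vectors near $\vec p^*$ fill, to first order, the affine flat $\vec e+V$. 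Hence it suffices to make $\vec e$ the minimizer of $\norm{\vec C\,\cdot}_1$ on $\vec e+V$.

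First I would extract a separating functional from the rank condition: since $\vec e\notin V$, the orthogonal projection $\vec g$ of $\vec e$ onto $V^\perp$ is nonzero and satisfies $\langle\vec g,\vec e\rangle=\norm{\vec g}_2^2>0$ while $\vec g\perp V$. I then build $\vec C\in\R^{n\times n}$ by taking $\vec g^\top$ as its first row and completing the remaining $n-1$ rows by any basis $\vec c_2,\dots,\vec c_n$ of the hyperplane $\vec e^\perp$. Because $\vec g\notin\vec e^\perp$, these $n$ rows are linearly independent, so $\vec C$ is invertible and $MAE(\vec C\,\cdot)$ is a genuine norm of the error vector, matching the strength of Theorem~\ref{thm:main}. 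By construction $\vec C\vec e=\langle\vec g,\vec e\rangle\,\vec u_1$ is a positive multiple of the first standard basis vector $\vec u_1$, i.e.\ a point where $\norm{\cdot}_1$ has a kink.

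The decisive step is then a local-optimality estimate at this kink. Expanding $\vec e(\vec p)=\vec e-\vec M(\vec p-\vec p^*)+\vec r(\vec p)$ with $\vec r(\vec p)=o(\norm{\vec p-\vec p^*})$, the first coordinate of $\vec C\,\vec e(\vec p)$ stays positive and contributes $\langle\vec g,\vec e(\vec p)\rangle=\norm{\vec C\vec e}_1+\langle\vec g,\vec r(\vec p)\rangle$, because $\vec g\perp V$ annihilates the linear term; the remaining coordinates contribute the nonnegative amount $\sum_{i\ge 2}\abs{\langle\vec c_i,\vec M(\vec p-\vec p^*)\rangle-\langle\vec c_i,\vec r(\vec p)\rangle}$. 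The key point is that $\vec v\mapsto\sum_{i\ge 2}\abs{\langle\vec c_i,\vec v\rangle}$ is a genuine \emph{norm} when restricted to $V$: its kernel inside $V$ is $V\cap(\vec e^\perp)^\perp=V\cap\span\set{\vec e}=\set{\vec 0}$, again exactly by the rank condition. Hence this term grows linearly in $\norm{\vec M(\vec p-\vec p^*)}_2$ and should dominate the $o(\norm{\vec p-\vec p^*})$ perturbation coming from $\vec r$, yielding $\norm{\vec C\,\vec e(\vec p)}_1\ge\norm{\vec C\vec e}_1$ in a neighborhood of $\vec p^*$.

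I expect the main obstacle to be precisely the bookkeeping in this last estimate, namely ensuring that the linear growth of the kink genuinely overpowers the higher-order remainder $\vec r$ in \emph{all} directions. This is clean in directions where $\vec M(\vec p-\vec p^*)\neq\vec 0$, where the contribution is bounded below by $c\,\norm{\vec M(\vec p-\vec p^*)}_2$ with $c>0$ from the norm-on-$V$ property; converting this into a lower bound in $\norm{\vec p-\vec p^*}$ appeals to the column-rank behaviour of $\vec M$, and the degenerate directions in $\ker\vec M$ are handled exactly as in the proof of Theorem~\ref{thm:main}. If one is content with a non-strict statement, a far shorter route also works: choosing instead a rank-deficient $\vec C$ with $\vec C\vec e=\vec 0$, e.g.\ the orthogonal projector $\vec C=\vec I-\vec e\,\vec e^\top/\norm{\vec e}_2^2$ onto $\vec e^\perp$, makes $MAE(\vec C\,\vec e(\vec p^*))=0$ and hence a \emph{global} minimum since $MAE\ge 0$ everywhere, at the cost of $MAE(\vec C\,\cdot)$ being only a semi-norm.
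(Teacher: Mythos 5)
Your route is genuinely different from the paper's, and it is worth seeing what each buys. The paper proves the corollary by pure reduction: it re-runs the construction of Theorem \ref{thm:main} with the semi-norm $b(\vec x):=\norm{\vec B\vec x}_1$ in place of \eqref{eqn:seminorm}, and then simply observes that the resulting norm $\frac 3 2\norm{\vec B\vec x}_1+\frac{\alpha}{2}\abs{\vec w_1^\top\vec x}$ is literally $\norm{\vec C\vec x}_1=n\cdot MAE(\vec C\vec x)$ for the stacked matrix $\vec C=\binom{(3/2)\cdot\vec B}{(\alpha/2)\cdot\vec w_1^\top}$. Local optimality is \emph{inherited} from Theorem \ref{thm:main}, not re-proved; the only content of the corollary is the re-expression of that norm as an MAE of a linearly transformed error. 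You instead build a new, invertible $\vec C$ (first row $\vec g\in(\im\vec M)^\perp$, remaining rows a basis of $\vec e^\perp$) and re-derive local optimality from scratch via a first-order kink estimate. Your choice $\vec g\perp\im\vec M$ is a nice idea the paper does not use — it kills the first-order decrease in the kink coordinate exactly — but by re-proving optimality you re-open the hard part of Theorem \ref{thm:main} rather than citing it, and that is where your argument has a genuine hole.

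The gap is precisely the one you flag and then defer: your estimate gives $\norm{\vec C\vec e(\vec p)}_1\geq\norm{\vec C\vec e}_1+c\norm{\vec M(\vec p-\vec p^*)}_2-C'\norm{\vec r(\vec p)}_2$, and beating the $o(\norm{\vec p-\vec p^*})$ remainder requires $\norm{\vec M(\vec p-\vec p^*)}_2\geq c'\norm{\vec p-\vec p^*}$, i.e.\ $\ker\vec M=\set{\vec 0}$. Nothing in the hypotheses grants this: the rank condition \eqref{eqn:rank-condition} only says $\vec e\notin\im\vec M$ and is satisfied even by $\vec M=\vec 0$. Along a direction $\vec h\in\ker\vec M$ the first-order data vanish and $\vec e(\vec p^*+t\vec h)=\vec e+\vec r(t)$ with $\vec r(t)=o(t)$ completely uncontrolled; taking for instance $\vec r(t)=-t^2\vec e/\norm{\vec e}_2$ (realizable by a smooth $f$) gives $MAE(\vec C\,\vec e(\vec p^*+t\vec h))=(1-t^2/\norm{\vec e}_2)\,MAE(\vec C\vec e)$, strictly below the value at $\vec p^*$ — for your invertible $\vec C$, and indeed for any injective one. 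Nor can the deferral ``handled exactly as in the proof of Theorem \ref{thm:main}'' be cashed in: the paper's Lemma \ref{lem:local-optimality} rules out $e(\vec p)\in N(\vec B)$ by deriving $\vec M\vec v=\lambda\,e(\vec p^*)$ and contradicting the rank condition, which only works when $\lambda\neq 0$; in kernel directions the limit is $\vec 0$, so there is no quantitative treatment there to borrow (the paper's corollary avoids this issue only because it never re-proves optimality). Your fallback choice $\vec C=\vec I-\vec e\vec e^\top/\norm{\vec e}_2^2$ does prove the corollary as literally stated — $MAE(\vec C\vec e(\vec p^*))=0$ is a global minimum, with no hypotheses needed at all — and is essentially the paper's semi-norm $b$ in $\ell_1$ form; but since $MAE(\vec C\,\cdot)$ is then only a semi-norm of the error, it delivers a strictly weaker fact than the paper's version, whose $\vec C$ is invertible, so it should be presented as such rather than as an equivalent shortcut.
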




\subsection{Multi-Output \ac{ML} Models}\label{sec:multivariate-ml-models}

Let us now drop the assumption of our \ac{ML} model to output only numbers,
and look at vectors as output. This transforms the error vector into an error
matrix, and we have the following result, stated again in full detail, and
proven in Appendix \ref{apx:proof:cor:main}.
\begin{corollary}\label{cor:main}
	Take $k,m,d\geq 1$ and let $f:\R^m\times\R^d\to\R^k$ be parameterized by a
	vector $\vec p\in\R^d$, and write $f_j$ for $j=1,\ldots,k$ to denote the
	$j$-th coordinate function. For a fixed parameter vector $\vec p^*$ and
	arbitrary training data $(\vec x_1,\vec y_1),\ldots,(\vec x_n,\vec
	y_n)\in\R^m\times\R^k$, define the error matrix $\vec E$ row-wise as $\vec
	E=(\vec y_i^{\top}-f(\vec x_i,\vec p^*)^{\top})_{i=1}^n\in\R^{n\times k}$. In
	this matrix, let $\vec e_j\in\R^n$ be the $j$-th column.
	
	For all $j=1,2,\ldots,k$ and all training points $\vec x_i$, assume that each $f_j(\vec x_i,\vec p)$ is
    totally differentiable w.r.t. $\vec p$ at (the same point) $\vec p=\vec p^*$, with
    derivative $\vec d_{i,j}=D_{\vec p}(f_j(\vec x_i,\vec p))(\vec p^*)\in\R^d$.
    For each $j$, define the matrix $\vec M_j=(\vec d_{i,j}^\top)_{i=1}^n\in\R^{n\times d}$
    and let the rank
condition $\rank(\vec M_j|\vec e_j)\neq\rank(\vec M_j)$	hold.
	
	Then, there exists a matrix-norm $\norm{\cdot}$ on $\R^{n\times k}$ such that
	$\vec p^*$ locally minimizes $\norm{\vec E(\vec p^*)}$, i.e., there is an
	open neighborhood $U$ of $\vec p^*$ s.t. $\norm{\vec E(\vec p^*)}\leq\norm{\vec
		E(\vec p)}$ for all $\vec p\in U$.
\end{corollary}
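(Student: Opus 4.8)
The plan is to reduce the multi-output case to $k$ independent applications of Theorem \ref{thm:main}, one per coordinate function, and then glue the resulting norms on $\R^n$ into a single norm on $\R^{n\times k}$. The whole corollary is essentially a ``stacking'' argument, so the work lies in setting up the reduction cleanly rather than in any new analysis.

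First I would treat each coordinate separately. Fixing $j\in\set{1,\ldots,k}$ and considering the scalar-valued function $f_j\colon\R^m\times\R^d\to\R$, the associated error vector is exactly the $j$-th column $\vec e_j$ of $\vec E$, since the $(i,j)$ entry of $\vec E$ is $y_{i,j}-f_j(\vec x_i,\vec p^*)$. By hypothesis $f_j(\vec x_i,\cdot)$ is totally differentiable at $\vec p^*$ with derivative $\vec d_{i,j}$, whose transposes stack into $\vec M_j$, and we assume $\rank(\vec M_j\,|\,\vec e_j)\neq\rank(\vec M_j)$. Thus the triple $(f_j,\vec e_j,\vec M_j)$ satisfies the hypotheses of Lemma \ref{lem:local-optimality}, and Theorem \ref{thm:main} applies to it. This yields, for each $j$, a genuine norm $\norm{\cdot}_{(j)}$ on $\R^n$ together with an open neighborhood $U_j$ of $\vec p^*$ in $\R^d$ on which $\norm{\vec e_j(\vec p^*)}_{(j)}\leq\norm{\vec e_j(\vec p)}_{(j)}$, where $\vec e_j(\vec p)=(y_{i,j}-f_j(\vec x_i,\vec p))_{i=1}^n$ is the $j$-th column of $\vec E(\vec p)$.

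Next I would assemble these into a matrix-norm. Identifying $\R^{n\times k}$ with the direct sum of its $k$ columns, I define
\[
	\norm{\vec A}:=\sum_{j=1}^k\norm{\vec a_j}_{(j)},
\]
where $\vec a_j$ denotes the $j$-th column of $\vec A\in\R^{n\times k}$. A routine check confirms this is a norm: absolute homogeneity and the triangle inequality pass through the finite sum, and positive-definiteness holds because each $\norm{\cdot}_{(j)}$ is a \emph{full} norm, so $\norm{\vec A}=0$ forces every column, hence $\vec A$, to vanish. (A max- or Euclidean-type combination of the column norms would serve equally well.) It is precisely here that I need Theorem \ref{thm:main} rather than only the seminorm of Lemma \ref{lem:local-optimality}: were the $\norm{\cdot}_{(j)}$ merely seminorms, the combination would be a seminorm and the asserted matrix-norm would not be obtained.

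Finally I would combine the local minima. Setting $U:=\bigcap_{j=1}^k U_j$, this is open and contains $\vec p^*$ as a finite intersection of such neighborhoods in the fixed parameter space $\R^d$. For every $\vec p\in U$ and every $j$ we have $\norm{\vec e_j(\vec p^*)}_{(j)}\leq\norm{\vec e_j(\vec p)}_{(j)}$; summing over $j$ gives $\norm{\vec E(\vec p^*)}\leq\norm{\vec E(\vec p)}$, which is the claimed local optimality. The argument carries no genuine obstacle beyond the bookkeeping of matching each column's hypothesis to the scalar theorem; the one point demanding care is verifying that the glued object is an honest norm, and that is exactly what invoking the norm-producing Theorem \ref{thm:main} per coordinate (instead of the weaker Lemma \ref{lem:local-optimality}) guarantees.
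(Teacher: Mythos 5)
Your proof is correct and follows essentially the same route as the paper's: apply Theorem \ref{thm:main} coordinate-wise to each $f_j$, define the matrix norm as the sum of the per-column norms $\norm{\vec a_j}_{(j)}$, and sum the $k$ local-optimality inequalities. Your treatment is in fact slightly more careful than the paper's, since you make explicit the intersection $U=\bigcap_{j=1}^k U_j$ of the neighborhoods and the positive-definiteness check that requires full norms rather than seminorms, both of which the paper leaves implicit.
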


Equipped with Theorem \ref{thm:main} and its corollaries, we can now finally
state a result about plausible deniability, similar to Theorem
\ref{thm:deniability}. The proof is by a direct application of the respective
results as stated above.

\begin{theorem}\label{thm:plausible-deniability}
	For a given \ac{ML} model $f$, let the (unknown) training data come from a
	random source with known distribution $\mathcal{F}$. Then, for every choice
	of alternative training data $T'$, randomly sampled from the same
	distribution $\mathcal{F}$, we can find an error metric induced by a
	(properly crafted) norm $\norm{\cdot}$ so that the training algorithm, upon
	receiving the training data $T'$ and error metric (through the configuration
	$\omega$), reproduces the given model $f$ exactly. Thus, any data recovered
	from $f$ is plausibly deniable in the sense of Def.
	\ref{def:plausible-deniability}.
\end{theorem}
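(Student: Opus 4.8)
The plan is to read Theorem~\ref{thm:plausible-deniability} as a direct corollary of Theorem~\ref{thm:main} (and, for vector-valued models, Corollary~\ref{cor:main}), whose only genuine content is verifying that data drawn at random from $\mathcal{F}$ meets the hypotheses of Lemma~\ref{lem:local-optimality} almost surely. I would begin by fixing the model as $f_0=f(\cdot,\vec p^*)$ and drawing the decoy set $T'=\set{(\vec x_i,y_i)}_{i=1}^n$ from $\mathcal{F}$. From $T'$ and the known $\vec p^*$ I form the error vector $\vec e=(y_i-f(\vec x_i,\vec p^*))_{i=1}^n$ and the derivative matrix $\vec M\in\R^{n\times d}$ whose rows are the $\vec d_i^\top=D_{\vec p}(f(\vec x_i,\vec p))(\vec p^*)^\top$, i.e.\ exactly the objects of Lemma~\ref{lem:local-optimality}.

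The crux is the rank condition \eqref{eqn:rank-condition}, which asserts that $\vec e\notin\im(\vec M)$. This is where the probabilistic hypothesis enters. Since $\vec M\in\R^{n\times d}$, its column space is a subspace of $\R^n$ of dimension at most $d$, hence a \emph{proper} subspace whenever the sample size satisfies $n>d$ (the regime assumed throughout, cf.\ Theorem~\ref{thm:deniability}). I would then condition on the inputs $\vec x_1,\ldots,\vec x_n$, which alone determine $\vec M$: given these, $\vec e$ is merely a fixed translate of the random output vector $(y_1,\ldots,y_n)$. Provided $\mathcal{F}$ has an atomless conditional law for the $y_i$, the vector $\vec e$ has a continuous distribution, so the event $\vec e\in\im(\vec M)$ lies in a Lebesgue-null set and occurs with probability $0$. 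Thus the rank condition holds for almost every sample; for the concrete $T'$ actually drawn it can be checked a posteriori, and a (measure-zero) failing draw can simply be discarded and resampled.

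With the rank condition in hand I would invoke Theorem~\ref{thm:main} to obtain a genuine norm $\norm{\cdot}$ on $\R^n$ for which $\vec p^*$ locally minimizes $\norm{e(\vec p)}$. Packaging this norm as the error-metric part of a configuration $\omega'$ that depends on $T'$ (as Definition~\ref{def:plausible-deniability} explicitly permits), the training problem \eqref{eqn:machine-learning-problem-generic} run on $T'$ has $f_0$ as a local optimum, so the optimizer can legitimately return $f_0$. For multi-output models the identical argument applies coordinatewise via Corollary~\ref{cor:main}, requiring $\vec e_j\notin\im(\vec M_j)$ for each $j$, again almost sure by the same null-set reasoning applied to each column. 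Since $T'\neq T$ has distribution $\mathcal{F}$ and reproduces $f_0$, Definition~\ref{def:plausible-deniability} is satisfied and plausible deniability follows.

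I expect the main obstacle to be stating the genericity argument cleanly: $\vec M$ and $\vec e$ both depend on the same sample, so the null-set claim must be made \emph{after} conditioning on the $\vec x_i$ and exploiting only the residual randomness in the $y_i$ — otherwise the coupling between $\vec M$ and $\vec e$ is easy to fumble. A secondary delicate point is the precise reading of ``reproduces $f$ exactly'': Theorem~\ref{thm:main} yields only \emph{local} optimality of $\vec p^*$, because the extension from the semi-norm of Lemma~\ref{lem:local-optimality} (which attains the exact value $0$ at $\vec e$) to a genuine norm necessarily forfeits that global zero. Hence the honest claim is that $f_0$ is a legitimate, locally optimal training outcome under $(T',\omega')$, not that every optimizer is forced to converge to it.
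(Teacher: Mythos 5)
Your proposal is correct and follows the same route as the paper: the paper's entire proof of Theorem~\ref{thm:plausible-deniability} is the single sentence that it follows ``by a direct application of the respective results as stated above,'' i.e.\ exactly the reduction to Theorem~\ref{thm:main} (and Corollary~\ref{cor:main} for vector outputs) that you carry out. The difference is that you actually supply the two pieces the paper silently assumes. First, the paper never verifies that randomly sampled decoy data $T'$ satisfies the rank condition \eqref{eqn:rank-condition}; your conditioning argument --- fix the $\vec x_i$ (hence $\vec M$), note $\im(\vec M)$ is a proper subspace when $n>d$, and use an atomless conditional law for the $y_i$ to put $\vec e\in\im(\vec M)$ in a null set --- is the genericity step needed to make ``for every choice of alternative training data $T'$, randomly sampled from $\mathcal{F}$'' true almost surely rather than vacuously, and your caution about the coupling of $\vec M$ and $\vec e$ through the same sample is exactly the right place to be careful. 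Second, you correctly flag that Theorem~\ref{thm:main} delivers only \emph{local} optimality of $\vec p^*$ under the crafted norm, so the phrase ``reproduces the given model $f$ exactly'' in the theorem statement overstates what the cited machinery proves: what one gets is that $f_0$ is a legitimate locally optimal outcome of training on $(T',\omega')$, not that any optimizer must return it. The paper itself implicitly concedes this weaker reading (its numerical section only recovers $\vec p$ approximately, and the conclusions emphasize possibility rather than algorithmic convergence), so your caveat is not a flaw in your argument but an accurate diagnosis of a gap between the paper's statement and its one-line proof.
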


The case where the distribution $\mathcal{F}$ is unknown is even simpler,
since plausibility can only be argued if there is a ground truth known as the
distribution $\mathcal{F}$. If this ground truth is not available, there is
nothing to argue regarding plausibility.

\section{Numerical Evaluation and Validation}\label{sec:evaluation}
We demonstrate a proof-of-concept for our plausible deniability concept in
machine learning in the context of a fictional scenario of fitting a
regression model, delegating the (lengthier) details to Appendix
\ref{apx:numerical-example-full}. The experiment was conducted as follows: we
picked a random vector $\vec p$ and defined the \ac{ML} model $f(\vec x)=\vec
p^T\cdot \vec x$ from it. Next, this model was evaluated on randomly chosen
vectors $\vec x_1,\ldots,\vec x_n$, computing the responses $y_i=f(\vec
x_i)+\eps_i$ with a random error term on it. This mimics the model $f$ to
have been fitted from the so-constructed training data $T=(\vec
x_i,y_i)_{i=1}^n$.

Then, towards a denial of the (correct!) training data set, we randomly
sampled a fresh set $T'=(\vec x_i',y_i')_{i=1}^n$, in which the values $y_i'$
were also drawn stochastically independent (of their $\vec x_i'$'s). From
this set $T'$, we constructed the norm as Theorem \ref{thm:main} prescribes
(see Figure \ref{fig:norm-evaluation} in Appendix \ref{apx:proof:cor:main}
for the algorithmic details), and re-fitted the regression model. Plausible
deniability is then the expectation of finding approximately the vector $\vec
p$ again, and indeed, an example execution of this program delivered the
following results for a six-dimensional regression model (small enough for a
visual inspection):

\begin{center}
\begin{tabular}{c|c}
  original vector $\vec p$ & $\vec p$ as trained from decoy data $T'$ \\\hline
  -0.57104  & -0.56936 \\
  -1.53456  & -1.53402 \\
  -2.45770  & -2.45657 \\
  -2.12341  & -2.12261 \\
  -1.26093  & -1.25992 \\
  -1.91170  & -1.91082 \\
\end{tabular}
\end{center}

This experiment is repeatable (with comparably good results) using our
implementation\footnote{code will be released if this paper
	receives positive reviews} of the construction behind Theorem
\ref{thm:main} in GNU Octave (version 5.2.0) \cite{octave5-2-0}, with the
\texttt{optim} package (version 1.6.0) \cite{till_optim_2019}, and for the
particular application to a regression model. We stress that the algorithms
used to fit the \ac{ML} model were hereby taken ``off the shelf'' that
\texttt{optim} provides, with no modification to the inner code (or its
default configuration).



\section{Related Work}\label{sec:related-work}
The conflicting interests of available data and data privacy have long been understood. It has been shown that the problem of minimizing information loss under given privacy constraints is NP-hard \cite{vinterbo_privacy_2004}. An overview on threats and solutions of privacy preserving machine learning is provided in \cite{al-rubaie_privacy-preserving_2019} to close the gap between the communities of \ac{ML} and privacy.\\

Legal requirements such as the \ac{GDPR} put limitations on any kind of method that uses personal data, including \ac{ML} applications. The regulation aims at preventing any discrimination, so critical data such as health data now require protection \cite{azencott_machine_2018}. Approaches such as the privacy-aware machine learning model provisioning platform AMNESIA \cite{stach_amnesia_2020} make sure that \ac{ML} models only remember data they are supposed to remember. A new method to preserve privacy for classification methods in distributed systems prevents that data or the learned models are directly revealed \cite{jia_preserving_2018} and can even be extended to hierarchical distributed systems \cite{jia_efficient_2019}. The vulnerabilities \ac{ML} methods induce in software systems can also be analysed based on known attacks \cite{papernot_sok_2018}.
A recent survey on privacy-preserving \ac{ML} is given in \cite{junxu_survey_2020}, showing that the majority of new approaches focus on specific domains. In social networks, systems are develop that decide (semi-)automatically whether to share information with others \cite{bilogrevic_machine-learning_2016}. Frameworks for privacy-preserving methods in healthcare are also in development \cite{fritchman_privacy-preserving_2018}.
Classification protocols that ensure confidentiality of both data and classifier are described in \cite{bost_machine_2015} and implemented by modification of existing protocols.
In 2017, Google presented a protocol that enables deep learning from user data without learning about the individual user \cite{bonawitz_practical_2017}.
An algorithm for privacy-preserving logistic regression was designed to address the trade-off between privacy and learnability and to learn from private databases \cite{chaudhuri_privacy-preserving_2009}.


\section{Conclusions}\label{sec:conclusion}
\subsection{Suspicion by ``non-standard'' error metrics}
Obviously, it may be suspicious if the norm used for the training is not
released a priori as part of the description of the \ac{ML} model, and our
proposed mechanism of deniability works only if the norm used for the
training is kept secret initially. Furthermore, the honest creator of the
model cannot later come out with a strangely crafted norm to claim having
done the training with this, if the more natural choice would have been MAE,
RMSE or others. So, to make the denial ``work'', the process would require
the model creator to initially state that the training will be done with a
norm that has a ``certain algebraic structure'', namely that which Theorem
\ref{thm:main} prescribes. This lets the honest owner of the norm later
change the appearance of the norm for a denial, without creating suspicion by
coming out with something completely different. Since all vector norms, and
hence also all matrix norms are topologically equivalent, such an a priori
vote for a certain class of norms is not precluded by theory, and a
legitimate design choice up to the model trainer.

\subsection{Accounting for Partial Knowledge}
If the attacker has partial knowledge of the training data, say, a few
columns / variables are known, but not all of them, the situation with
plausible deniability is unchanged: the denying party can simply include this
knowledge in the decoy training data (as this can be chosen freely anyway),
and construct the norm from the remaining variables. This even works when the
attacker knows \emph{all} variables in the training records $\vec x_i$, in
which case the resulting responses $y_i$ are uniquely recoverable by a mere
evaluation of the function $f$. This is the trivial case of recovery, against
which no countermeasure can be given. However, if there is at least some
uncertainty about a variable in the training data, and the model is
``sufficiently dependent'' on this unknown inputs, then plausible deniability
becomes applicable again.

Overall, the finding in this work is that \emph{privacy by
non-recoverability} essentially holds without much ado, provided that there
is lot more data used for the training than the model can embody via its
parameters. Additional precautions for plausible deniability are only
required by announcing the error metric prior to any training, or as part of
the description of the model upon its release.

The important point here is \emph{not} that the training on a suitably
crafted norm is algorithmically feasible, but instead that \emph{it is
possible}. While we do not claim the norm from Theorem \ref{thm:main} to lend
itself to an efficient optimization in high-dimensional cases (such as neural
networks), but the existence assertion made by the theorem may already be
enough, since it is arguable that one has taken the decoy data and went
through very lengthy and time-consuming training to have produced the model
in discussion.

The lesson learned here to escape the plausible deniability issue is to go
for maximum transparency of the learning process, which includes in
particular an \emph{a priori} and \emph{publicly documented} specification of the error metric and training algorithm before deniability arguments are made. In this way, one cannot later silently change the error metric towards consistency with faked training
data.

\section*{Acknowledgments}
This work was supported by the research Project ODYSSEUS ("Simulation und
Analyse kritischer Netzwerk Infrastrukturen in Städten") funded by the
Austrian Research Promotion Agency under Grant No. 873539.

\bibliographystyle{plain}

\newpage

\appendix

\section{Error Measures from Topological
	Norms}\label{sec:error-measures-from-norms}

A norm on $\R^n$ is a mapping $\norm{\cdot}:\R^n\to\R$ with the following properties:
\begin{enumerate}
	\item positive definiteness: $\norm{\vec x}\geq 0$ for all $\vec x$, with $\norm{\vec x}=0$ if and only if $\vec x=0$.
	\item homogeneity: $\norm{\lambda\cdot\vec x}=\abs{\lambda}\cdot\norm{\vec x}$ for all $\lambda\in\R$.
	\item triangle inequality: $\norm{\vec x+\vec y}\leq\norm{\vec x}+\norm{\vec y}$ for all $\vec x,\vec y$.
\end{enumerate}
If one allows $\norm{\vec x}=0$ for some $\vec x\neq 0$, then we call $\norm{\cdot}$ a \emph{semi-norm}. Every norm induces a \emph{metric} $d(\vec x,\vec y)=\norm{\vec x-\vec y}$, or a \emph{pseudometric} if we use a semi-norm.

At least the following popular
choices for error measures are directly expressible via norms. For the
description, let us put $\hat y_i:=f(\vec x_i,p)$ be the \ac{ML} model's
estimate on the training data $(\vec x_i,y_i)$ for a total of
$i=1,2,\ldots,n$ training samples. For abbreviation, put $\vec
y=(y_1,\ldots,y_n),\hat{\vec y}=(\hat y_1,\ldots,\hat y_n)\in\R^n$, and
recall that a general $p$-norm for $p\geq 1$ on $\R^n$ is defined by
\[
\norm{\vec y}_p = \left[\sum_{i=1}^{n}\abs{y_i}^p\right]^{\frac 1 p},
\]
with the practically most important special cases of the 1-norm $\norm{\vec
	y}_1=\sum_{i=1}^{n}\abs{y_i}$, Euclidian norm $\norm{\vec
	y}_2=\sqrt{y_1^2+y_2^2+\ldots+y_n^2}$, and maximum-norm $\norm{\vec
	y}_{\infty}=\max_i\abs{y_i}$.

\begin{enumerate}
	\item Mean squared error
	\begin{equation}\label{eqn:mse}
		MSE = \frac 1 n\sum_{i=1}^{n}(y_i-\hat y_i)^2 = \frac 1 n\norm{\vec y-\hat{\vec y}}_2^2
	\end{equation}
	\item Root mean squared error
	\begin{equation}\label{eqn:rmse}
		RMSE = \sqrt{MSE} = \frac 1 {\sqrt{n}}\norm{\vec y-\hat{\vec y}}_2
	\end{equation}
	\item Mean absolute error
	\begin{equation}\label{eqn:mae}
		MAE = \frac 1 n\sum_{i=1}^{n}\abs{y_i -\hat y_i} = \frac 1 n\cdot\norm{\vec y-\hat{\vec y}}_1
	\end{equation}
\end{enumerate}

We will not go into discussions about pros and cons of these choices (or
alternatives thereto), beyond remarking that the squared errors can be easier
to handle for their differentiability properties. The MAE is on the contrary
more robust against outliers, which the (R)MSE penalize more, so that the
fitting is more sensitive to training data that has not been cleaned from
outliers before.

Defining an error metric from a norm as yet another appeal, since
(topologically) all norms over finite-dimensional real vector-spaces are
equivalent. Since we will make implicit use of that in the following, we
state this well known result for vector-norms, whose canonical version for matrix-norms holds likewise:
\begin{theorem}[see, e.g., {\cite[p.17]{walter_analysis_1995}}]\label{thm:equivalence-of-norms}
	Let any two norms $\norm{\cdot}'$ and $\norm{\cdot}''$ on $\R^n$ be given.
	Then there are constants $\alpha,\beta>0$ such that
	\[
	\alpha\cdot\norm{\vec x}'\leq \norm{\vec x}''\leq\beta\cdot\norm{\vec x}'.
	\]
\end{theorem}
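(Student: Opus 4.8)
The plan is to reduce the general claim to comparison against a single fixed reference norm, and then exploit compactness of the unit sphere, which is exactly where finite-dimensionality enters. First I would note that norm equivalence is a transitive relation: if $\norm{\cdot}'$ is equivalent to the Euclidean norm $\norm{\cdot}_2$ with one pair of constants, and $\norm{\cdot}''$ is equivalent to $\norm{\cdot}_2$ with another, then chaining the four inequalities produces constants $\alpha,\beta>0$ relating $\norm{\cdot}'$ and $\norm{\cdot}''$ directly. Hence it suffices to prove that an \emph{arbitrary} norm $\norm{\cdot}$ on $\R^n$ is equivalent to $\norm{\cdot}_2$.

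The key preparatory step is to show that $\norm{\cdot}$ is continuous as a map from $(\R^n,\norm{\cdot}_2)$ to $\R$. Expanding $\vec x=\sum_{i=1}^n x_i\vec e_i$ in the standard basis and applying the triangle inequality together with Cauchy--Schwarz yields $\norm{\vec x}\leq\sum_{i=1}^n\abs{x_i}\norm{\vec e_i}\leq C\cdot\norm{\vec x}_2$, where $C=(\sum_i\norm{\vec e_i}^2)^{1/2}$. This already delivers one half of the equivalence, namely the upper bound with $\beta=C$. The reverse triangle inequality then gives $\abs{\norm{\vec x}-\norm{\vec y}}\leq\norm{\vec x-\vec y}\leq C\norm{\vec x-\vec y}_2$, so $\norm{\cdot}$ is in fact Lipschitz continuous in the Euclidean topology.

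For the lower bound I would restrict attention to the Euclidean unit sphere $S=\set{\vec x\in\R^n:\norm{\vec x}_2=1}$. By the Heine--Borel theorem, $S$ is compact (closed and bounded in $\R^n$), so the continuous function $\vec x\mapsto\norm{\vec x}$ attains a minimum value $m$ on $S$. Since $\vec 0\notin S$ and $\norm{\cdot}$ is positive definite, we get $m>0$. For any $\vec x\neq\vec 0$ the normalized vector $\vec x/\norm{\vec x}_2$ lies on $S$, whence $m\leq\norm{\vec x/\norm{\vec x}_2}=\norm{\vec x}/\norm{\vec x}_2$, i.e. $m\cdot\norm{\vec x}_2\leq\norm{\vec x}$; the case $\vec x=\vec 0$ is trivial. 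Thus $\norm{\cdot}$ is equivalent to $\norm{\cdot}_2$ with constants $\alpha=m$ and $\beta=C$, and combining this for both $\norm{\cdot}'$ and $\norm{\cdot}''$ via transitivity finishes the argument.

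The main obstacle, and the only place where finite-dimensionality is genuinely used, is the compactness of $S$: it is precisely Heine--Borel that fails in infinite dimensions, and correspondingly the theorem itself becomes false there. Everything else is routine once compactness is available, since the continuity estimate follows from Cauchy--Schwarz and the strict positivity of the minimum follows from positive definiteness of the norm.
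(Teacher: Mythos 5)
Your proof is correct and complete: the reduction by transitivity to a single comparison with $\norm{\cdot}_2$, the Cauchy--Schwarz estimate giving the upper constant and Lipschitz continuity, and the compactness/extreme-value argument on the Euclidean unit sphere giving a strictly positive lower constant are all sound, and you correctly identify Heine--Borel as the one genuinely finite-dimensional ingredient. The paper does not prove this theorem at all --- it cites it as a standard result from Walter's \emph{Analysis 2} --- and your argument is precisely the canonical textbook proof that such a reference contains, so there is nothing to reconcile.
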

By symmetry, this is an equivalence relation on the set of norms on $\R^n$,
and topologically speaking, they all induce the same topology. For
optimization, it means that once the distance $\norm{\vec x_i-\vec y}\to 0$
as $i\to\infty$ for a point sequence $\vec x_i$ towards approximating a
(fixed) target vector $\vec y$, this convergence would occur in the same way
(though not necessarily at the same speed) in \emph{every other norm} on
$\R^n$.

Practically, this means that fitting a \ac{ML} model to a training data set
by optimizing the norm of the error vector as in
\eqref{eqn:machine-learning-problem-generic}, will eventually lead to results
within a spherical neighborhood (ball) whose radius changes only by a
constant factor upon switching from $\norm{\cdot}'$ to $\norm{\cdot}''$.
Moreover, if an approximation with zero error is possible, both norms will
admit finding this optimum point.

\subsection{Pseudometrics for the Training}\label{apx:lemma-b}
Picking up on the outline started in Section \ref{sec:main-theorem}, a
flexible construction for a norm is $\norm{\vec x}_A := \sqrt{\vec
x^{\top}\cdot \vec A\cdot \vec x}$ with any positive definite matrix $\vec
A$. If $\vec A$ is not positive definite, we can still get a semi-norm as
$\vec x\mapsto \norm{\vec A\cdot\vec x}$, with only the property $\norm{\vec
x}=0\iff \vec x=0$ being violated in case that $\vec A$ has a nontrivial
\emph{nullspace} $N(\vec A)=\set{\vec x: \vec A\cdot\vec x=0}$, where by
nontrivial we mean $N(\vec A)\neq\set{\vec 0}$.

We will proceed by constructing a semi-norm that vanishes only for the given
error vector $e(\vec p^*)$ or scalar multiples thereof, under the chosen
parameter $\vec p^*$. Let us call this particular matrix $\vec B$, whose
existence and construction is not difficult to describe:
\begin{lemma}\label{lem:b-matrix}
	Let $\vec e\in\R^n$ be a vector, then there exists a matrix $\vec B$ having
	the nullspace $N(\vec B)=\span\set{\vec e}$. Geometrically, this matrix is a
	projection on a $(n-1)$-dimensional subspace of $\R^n$, corresponding to the
	orthogonal complement of $\span\set{\vec e}$ within $\R^n$.
\end{lemma}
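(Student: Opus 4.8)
The plan is to exhibit $\vec B$ explicitly as the orthogonal projection onto $(\span\set{\vec e})^{\perp}$, since the geometric description in the statement already dictates which matrix to build. The only degenerate case is $\vec e=\vec 0$, where $\span\set{\vec e}=\set{\vec 0}$ and we may simply take $\vec B=\vec I$, whose nullspace is trivial; so I assume $\vec e\neq\vec 0$ from here on.

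For the main case I would define the rank-one correction of the identity
\[
\vec B:=\vec I-\frac{1}{\vec e^{\top}\vec e}\,\vec e\,\vec e^{\top}.
\]
The first step is to check $\span\set{\vec e}\subseteq N(\vec B)$: a direct computation gives $\vec B\vec e=\vec e-\frac{\vec e^{\top}\vec e}{\vec e^{\top}\vec e}\vec e=\vec 0$, and since $N(\vec B)$ is a subspace, every scalar multiple of $\vec e$ is annihilated as well. The second step is the reverse inclusion $N(\vec B)\subseteq\span\set{\vec e}$: assuming $\vec B\vec x=\vec 0$ and rearranging yields $\vec x=\frac{\vec e^{\top}\vec x}{\vec e^{\top}\vec e}\,\vec e$, which is manifestly a scalar multiple of $\vec e$. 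The two inclusions together give $N(\vec B)=\span\set{\vec e}$, as claimed.

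It then remains to confirm the geometric assertion that $\vec B$ is the orthogonal projection onto the $(n-1)$-dimensional orthogonal complement. Here I would verify idempotency $\vec B^2=\vec B$ by expanding the product and treating $\vec e^{\top}\vec e$ as a scalar so the cross terms collapse, and symmetry $\vec B^{\top}=\vec B$, which is immediate from $(\vec e\,\vec e^{\top})^{\top}=\vec e\,\vec e^{\top}$. A symmetric idempotent matrix is an orthogonal projection with image $\im(\vec B)=N(\vec B)^{\perp}=(\span\set{\vec e})^{\perp}$, whose dimension is $n-\dim\span\set{\vec e}=n-1$.

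There is essentially no genuine obstacle here: $\vec B$ is the standard rank-one update of the identity, and each verification is a one-line matrix identity. The only points meriting care are isolating the $\vec e=\vec 0$ case, where the normalization by $\vec e^{\top}\vec e$ would be undefined, and establishing the reverse nullspace inclusion rather than only the obvious forward one.
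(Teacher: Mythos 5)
Your proof is correct, but it takes a different route from the paper. The paper's proof is computational rather than formulaic: it computes a \ac{SVD} $\vec e=\vec U\cdot\Sigma\cdot\vec V$ of the (column vector) $\vec e$ and assembles $\vec B$ from those rows of $\vec U^{\top}$ that correspond to zero diagonal entries of $\Sigma$; these rows form an orthonormal basis of $(\span\set{\vec e})^{\perp}$, so the resulting $\vec B$ is a rectangular $(n-1)\times n$ matrix with $N(\vec B)=\span\set{\vec e}$, and the nullspace and geometric claims are asserted to follow directly from that construction. You instead write down the square rank-one update $\vec B=\vec I-\frac{1}{\vec e^{\top}\vec e}\,\vec e\,\vec e^{\top}$ and verify everything by hand: both nullspace inclusions, idempotency, and symmetry. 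Your version buys elementary self-containedness (no appeal to SVD theory), an explicit closed form, treatment of the degenerate case $\vec e=\vec 0$ which the paper silently ignores, and a matrix that literally \emph{is} the orthogonal projection onto $V=(\span\set{\vec e})^{\perp}$ --- which matches how the paper later uses $\vec B$, namely as $\proj_V=\vec B$ in the proof of Theorem \ref{thm:main}. The paper's version buys a directly algorithmic recipe (a single numerically stable library call, which is what the Octave implementation in Figure \ref{fig:norm-evaluation} relies on) and yields as a by-product an orthonormal basis of $V$, which is reused in the later construction; with the rectangular $\vec B$, the semi-norm $b(\vec x)=\norm{\vec B\cdot\vec x}$ measures the projection through its coordinate vector in that basis, which for the 2-norm gives the same values as your square projector. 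Either construction serves the role required of Lemma \ref{lem:b-matrix} in the rest of the paper.
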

\begin{proof}
Compute a \ac{SVD} $\vec e=\vec U\cdot\Sigma\cdot \vec V$ for the error
	vector $\vec e$, and construct $\vec B$ with the same rows taken from $\vec
	U^{\top}$ that correspond to all-zero rows (i.e., zero diagonal elements) in
	$\Sigma$. The nullspace and geometric properties then directly follow from
	this construction.
\end{proof}

Using the matrix $\vec B$, we can define the semi-norm
\begin{equation}\label{eqn:seminorm}
	b(\vec x) := \norm{\vec B\cdot\vec x}
\end{equation}
in which $\norm{\cdot}$ is an arbitrary (full) norm on $\R^n$. This is a
well-defined semi-norm, with the properties that
\begin{itemize}
	\item $b(e(\vec p^*))=0$,
	\item and $b(\vec x)>0$ whenever $\vec x\notin\span\set{e(\vec p^*)}$.
\end{itemize}
The function $b$ induces a pseudometric on $\R^n$, as lacking only the
identity of indiscernible elements $d(\vec x,\vec y)=0\iff \vec x=\vec y$,
but still satisfying $b(\vec x)\geq 0$ for all $\vec x$, so that $\vec x=\vec
p^*$ is already an optimum. For later reference, let us capture the matrix
$\vec B$ more explicitly:

The vector $\vec e$ in Lemma \ref{lem:b-matrix} will be our error vector
$e(\vec p)$ for the parameterization $\vec p$, and the subspace that $\vec B$
projects on will be called $V$ throughout all other proofs appearing
hereafter.


Note that, in principle, we could directly use this pseudometric to train our
function $f$ towards taking a minimum error for the parameter $\vec p^*$. The
necessary assumption is that upon a change from $\vec p^*$ to another $\vec
p\neq \vec p^*$, we would leave the nullspace of $\vec B$, thus making the
function $b$ take on strictly positive values.

\section{Proofs}

\subsection{Proof of Theorem
\ref{thm:deniability}}\label{sec:proof-thm-deniability} This is a simple
information-theoretic argument: call $X$ the random variable representing the
(entirety) of the training data that went into the \ac{ML} model. Suppose
this is a set of $n$ records containing values that are sampled from a random
vector $Z$ in a stochastically independent manner. Then, $X$ is a matrix of
$n$ rows, and has the entropy $H(X)=n\cdot H(Z)$, where $H(Z)$ is the entropy
of the joint distribution over the attributes in the training data record.
From here on, let all logarithms have base 2.

The trained model is, from the adversary's perspective, a sample of another
random variable $Y$, representing the collection of parameters that define
the model. The recovery problem is the unique reconstruction of $X$, given
$Y$, and, information-theoretically speaking, solvable if and only if
$H(X|Y)=0$. First, note that $H(X|Y)=H(X,Y)-H(Y)$, and that $H(X,Y)\geq
H(X)$, giving $H(X|Y)\geq H(X)-H(Y)$. Similarly, the information extractable
from the trained model cannot be more than the shortest encoding of the model
itself. So, suppose that the model $f$, as a realization of the random
variable $Y$, comes with a string description of length at least
$K(Y)=\min\{\ell\in\N: f\sim Y$ has an $\ell$ bit string representation$\}$
bits. Then, the uncertainty reduction by $-H(Y)$ cannot exceed the bit count
to represent $f$, hence $H(X)-H(Y)\geq H(X)-K(Y)$. The maximum additional
knowledge of $K(Y)$ bits, contributed by $Y$, is increasing in $d$, since the
parameters at some point must be encoded within the string representation of
$f$. Using this and the fact that $H(X)=n\cdot H(Z)$, with $H(Z)$ being
constant (and determined by the uncertainty in the attributes of the data
that were used for training), we find
\begin{dmath}
H(X|Y)=H(X,Y)-H(Y)\geq H(X)-H(Y)\geq n\cdot H(Z)-K(Y)>0,
\end{dmath}
if the number $n$ of training records grows sufficiently large over the
number $d$ of parameters in the model. Once $H(X|Y)>0$, we have no hope for a
unique recovery of the training data from a model. To be precise, it means
that the distribution is non-degenerate, meaning that there is at least
another \emph{possibility} (i.e., element in the support) to appear with
nonzero probability. This completes the proof of Theorem
\ref{thm:deniability}.

Theorem \ref{thm:deniability} \emph{does not} imply any claim about the
possibility or impossibility to single out a most plausible among the
possible solutions. This would be more likely or easy, the smaller the
conditional or residual entropy comes out, so making $n$ large over $d$ is
practically desirable. Quantifying the chances of guessing is another story,
calling for conditional min-entropies here, and left as a direction of future
research.

While this already positively answers the question of \emph{privacy} of the
data embodied in a \ac{ML} model, this does not rule out a \qq{lucky guess}
of the correct training data. This guess becomes more likely, the smaller the
residual uncertainty $H(X|Y)$ is.

Irrespectively of the residual uncertainty, the stronger possibility of
denying a lucky guess \emph{even if it is correct} is what plausible
deniability is about.


\subsection{Proof of Lemma
\ref{lem:local-optimality}}\label{sec:proof:lem:local-optimality}
	Let $e(\vec p^*)$ be a vector spanning the nullspace of a matrix $\vec B$,
	and let $b$ be defined by \eqref{eqn:seminorm}. Since $f$ is differentiable,
	we can locally write the error
	term as
	\[
	e(\vec p) = e(\vec p^*) + (J_p(f))(\vec p^*)\cdot (\vec p-\vec p^*)+o(\norm{\vec p-\vec p^*})
	\]
	for all $\vec p$ in some neighborhood of $\vec p^*$. Abbreviating our
	notation by writing $\vec M:=(J_p(f))(\vec p^*)$, i.e., calling $\vec M$ the
	Jacobian of $f$ evaluated at $\vec p^*$, and rearranging terms, we get
	\begin{equation}\label{eqn:error-term-linearization}
		e(\vec p)-e(\vec p^*)=\vec M\cdot(\vec p-\vec p^*)+o(\norm{\vec p-\vec p^*}).
	\end{equation}
	
	Towards a contradiction, assume $e(\vec p)\in N(\vec B)$. By construction, we
	have $e(\vec p^*)\in N(\vec B)$, so the difference $e(\vec p)-e(\vec p^*)$ of
	the two is also in $N(\vec B)$. Likewise must thus be the right hand of
	\eqref{eqn:error-term-linearization} in $N(\vec B)$, and we can find a
	sequence $(\vec p_i)_{i\in\N}$ inside $N(\vec B)$ that satisfies
	\eqref{eqn:error-term-linearization}. Because $N(\vec B)=\span\set{e(\vec
		p^*)}$, we can write this sequence as $\vec p_i := \vec p^* + h_i\cdot \vec
	v$, using another null-sequence $(h_i)_{i\in\N}$ of values in $\R$ and the
	unit vector $\vec v:=e(\vec p^*)/\norm{e(\vec p^*)}$ (the norm is herein the
	one from \eqref{eqn:error-term-linearization}, and has nothing to do with the
	one asserted by Theorem \ref{thm:main}). Since the sequence $h_i\to 0$ is
	arbitrary (as is the sequence $\vec p_i$), let us just write $h\to 0$ to
	define the sequence of points in $N(\vec B)$.
	
	This lets us rewrite \eqref{eqn:error-term-linearization} as
	\[
	e(\vec p^*+h\cdot \vec v)-e(\vec p^*)=\vec M\cdot h\cdot \vec v+o(h),
	\]
	which we can divide by $h>0$ to get the quotient
	\[
	\frac{e(\vec p^*+h\cdot \vec v)-e(\vec p^*)}{h}=\vec M\cdot \vec v+\frac{o(h)}{h}.
	\]
	Therein, we have $\frac{o(h)}{h}\to 0$ as $h\to 0$ by the definition of the
	small-o, and on the left hand side, we get the directional derivative along
	$\vec v$ by taking $h\to 0$, since $f$ was assumed to be totally
	differentiable.
	
	Before, we noted the left side of \eqref{eqn:error-term-linearization} to be
	in $N(\vec B)$, and since subspaces are topologically closed, the limit,
	i.e., the directional derivative must also be in $N(\vec B)$. Accordingly,
	this puts the right side $\vec M\cdot \vec v\in N(\vec B)$, implying that
	there is some number $\lambda\in\R$ so that $\vec M\cdot \vec v=\lambda\cdot
	e(\vec p^*)$. But this means that $e(\vec p^*)$ must be in the column space
	of $\vec M$, which contradicts our hypothesis \eqref{eqn:rank-condition} on
	the rank and refutes the assumption that $e(\vec p)$ can be in $N(\vec B)$.
	
	We thus have $e(\vec p)\notin N(\vec B)$ in a neighborhood of $\vec p^*$, but
	$e(\vec p^*)\in N(\vec B)$. Now, using the semi-norm $b(\vec x)=\norm{\vec
		B\cdot\vec x}$, we see that $\norm{e(\vec p^*)}=0$, while $\norm{e(\vec
		p)}>0$, so $\vec p^*$ is locally optimal under this semi-norm.

\subsection{Proof of Theorem \ref{thm:main}}\label{sec:proof:thm:main}

	The norm as claimed to exist above
	will be
	\begin{equation}\label{eqn:b-norm}
		\norm{\vec x} := \norm{\vec x}_e + b(\vec x),
	\end{equation}
	with $b$ as we had so far, and another norm $\norm{\cdot}_e$, to be designed
	later (the subscript $\vec e$ to the norm is hereafter a reminder that this
	norm will depend on the error vector $\vec e$). Intuitively, one may think of
	$b$ as a ``penalty term'' to increase the norm upon any deviation from the
	desired error vector (hence making this point a minimum).
	
	At $\vec p^*$, we have
	\[
	\norm{e(\vec p^*)} = \norm{e(\vec p^*)}_e + \underbrace{b(e(\vec p^*))}_{=0} = \norm{e(\vec p^*)}_e,
	\]
	by our choice of the semi-norm $b$. Our goal is showing that
	\begin{equation}\label{eqn:norm-minimality-goal}
		\norm{e(\vec p^*)}\leq \norm{e(\vec p)}.
	\end{equation}
	From the triangle inequality that $\norm{\cdot}_e$ must satisfy, we get for
	any $\vec p\neq \vec p^*$, $\norm{e(\vec p^*)}_e=\norm{e(\vec p^*)-e(\vec
		p)+e(\vec p)}_e\leq\norm{e(\vec p)}_e+\norm{e(\vec p^*)-e(\vec p)}_e$, and by
	rearranging terms, we find $\norm{e(\vec p)}_e\geq\norm{e(\vec
		p^*)}_e-\norm{e(\vec p^*)-e(\vec p)}_e$. Substituting this into
	\eqref{eqn:b-norm}, we get
	\begin{dmath}\label{eqn:b-norm-lower-bound}
		\norm{e(\vec p)} = \norm{e(\vec p)}_e + b(e(\vec p)) \geq \norm{e(\vec p^*)}_e - \norm{e(\vec p^*)-e(\vec p)}_e + b(e(\vec p)).
	\end{dmath}
	To prove \eqref{eqn:norm-minimality-goal}, it suffices to construct a norm
	$\norm{\cdot}_e$ that satisfies
	\begin{equation}\label{eqn:norm-bound}
		\norm{e(\vec p^*)-e(\vec p)}_e\leq b(e(\vec p)),
	\end{equation}
	for all $\vec p$ for which $e(\vec p)$ is \emph{outside} of $N(\vec B)$
	(otherwise, for $e(\vec p)\in N(\vec B)$ distinct from $\vec p^*$ we would
	have $\norm{e(\vec p^*)-e(\vec p)}>0$ but $b(e(\vec p))=0$, invalidating
	\eqref{eqn:norm-bound}). The assurance that $e(\vec p)\notin N(\vec B)$ is
	hereby implied by the hypothesis and arguments of Lemma
	\ref{lem:local-optimality}, which we included in the theorem's hypothesis
	and hence not repeat here.

	So we can continue \eqref{eqn:b-norm-lower-bound} as
	\begin{dmath*}
	\norm{e(\vec p)} \geq \norm{e(\vec p^*)}_e\underbrace{- \norm{e(\vec p^*)-e(\vec p)}_e + b(e(\vec p))}_{\geq 0}\geq \norm{e(\vec p^*)}_e.
	\end{dmath*}
	With that accomplished, and recalling that $b$ was constructed towards
	$b(e(\vec p^*))=0$, we would find $\norm{e(\vec p)}\geq\norm{e(\vec p^*)}_e =
	\norm{e(\vec p^*)}_e + b(e(\vec p^*)) = \norm{e(\vec p^*)}$, which is exactly
	our goal \eqref{eqn:norm-minimality-goal}.
	
	Thus, we are left with the task of finding a norm $\norm{\cdot}_e$ that
	satisfies \eqref{eqn:norm-bound}. To this end, recall that the semi-norm $b$
	becomes a (full) norm on the factor space $\R^n\slash\sim$, modulo the
	equivalence relation $\vec x\sim \vec y\iff (\vec x-\vec y)\in N(\vec B)$. By
	the dimension formula, we have $\dim(\R^n)=\dim(\R^n\slash\sim)+\dim(N(\vec
	B))$, and since $\dim(N(\vec B))=1$, we find $\dim(\R^n\slash\sim)=n-1$.
	Since the factor space is a vector space over the reals, it is isomorphic to
	the $(n-1)$-dimensional orthogonal complement $V:=N(\vec B)^\bot\subset\R^n$
	of $N(\vec B)\simeq\R^1$. On $V$, we can define a norm, e.g.
	$\norm{\cdot}_2$. By Lemma \ref{lem:b-matrix}, $\proj_V=\vec B$ is the
	projection of a vector onto $V$, then (taking the same norm as in
	\eqref{eqn:seminorm}),
	\[
	\norm{\vec x}_V := \frac 1 2 \norm{\proj_V(\vec x)} = \frac 1 2\cdot b(\vec x)
	\]
	is a semi-norm on $\R^n$. This semi-norm trivially satisfies $\norm{\vec
		x}_V\leq \frac 1 2 b(\vec x)$ for all $x  \in \R^n$.
	Figure \ref{fig:projection-norm} provides an illustration. 

	\begin{figure*}
		\centering
		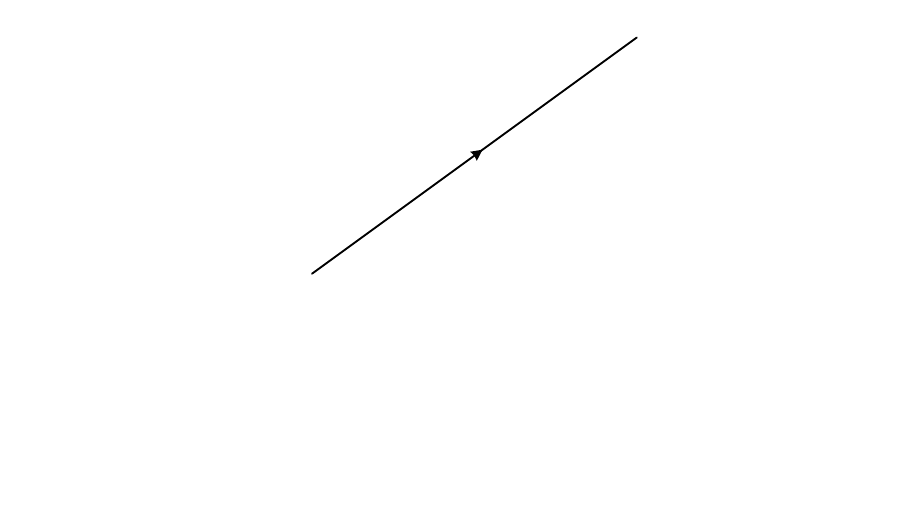
		\caption{Illustration of the projection norm $\norm{\cdot}_V$}\label{fig:projection-norm}
	\end{figure*}
	
	Now, for an intermediate wrap-up, $\norm{\cdot}_V$ is a semi-norm obeying the
	desired bounds for all vectors, especially those in the orthogonal complement
	of $N(\vec B)$, as desired. We now need to extend it to a full norm on the
	entire space $\R^n$ using the following idea: the sum of two semi-norms over
	the same vector space is again a semi-norm and it is a full norm, if and only
	if the intersection of kernels of the two semi-norms is exactly $\{0\}$. So
	we can construct a full norm by adding another semi-norm, that is a full norm
	on a 1-dimensional space (isomorphic to $N(B)$), which retains
	\eqref{eqn:norm-bound} on $\R^n\setminus N(\vec B)$.
	
	The idea is to project a vector in $N(\vec B)$ to the exterior of $N(\vec B)$
	and take the norm of the projection there. To materialize this plan, let
	$\set{\vec v_1,\ldots,\vec v_{n-1}}$ be an orthonormal basis of $V$.
	Furthermore, pick any vector $\vec w_1\in\R^n$ with two properties: (1) it is
	not a scalar multiple of $e(\vec p^*)$, and (2) it is linearly independent of
	all $\set{\vec v_1,\ldots,\vec v_{n-1}}$. In other words, we want both sets
	$\set{\vec w_1,e(\vec p^*)}$ and $\set{\vec w_1,\vec v_1,\ldots,\vec
		v_{n-1}}$ to be linearly independent\footnote{note that $\vec w_1$ is in any
		case \emph{non-orthogonal} to $e(\vec p^*)$, which assures that the
		projection of any element in $\span(e(\vec p^*))$ onto the subspace spanned
		by $\vec w_1$ is nontrivial; if $w_1$ were orthogonal to $e(\vec p^*)$, it
		would necessarily be a scalar multiple of some vector among $\vec
		v_1,\ldots,\vec v_{n-1}$, in which case it cannot be linearly independent of
		them, as we required too.}. An easy choice for $\vec w_1$ is to rotate the
	vector $e(\vec p^*)$ enough to become linearly independent of it, but not far
	enough to become lying in the orthogonal complement. Figure \ref{fig:w1norm}
	graphically sketches the idea formalized now.
	
	\begin{figure*}
		\centering
		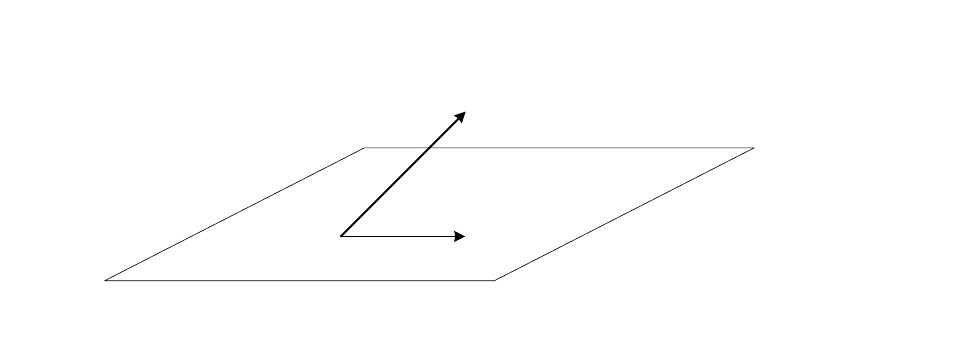
		\caption{Illustration of the construction of $\norm{\cdot}_W$}\label{fig:w1norm}
	\end{figure*}
	
	Call $W_1:=\span\set{\vec w_1}$ the linear hull of $\vec w_1$, and pick
	another $n - 2$ pairwise orthogonal vectors $\vec w_2,\ldots,\vec w_{n-1}$,
	whose entirety spans the space $W_{n-2}^\bot = \span\set{\vec w_2,\ldots,\vec
		w_{n-1}}$ (the subscript and superscript are here serving as reminders about
	the dimensionality and the orthogonality of this space relative to $W_1$).
	Clearly, we have
	\[
	\R^n\simeq W_1\oplus W_{n-2}^\bot \oplus\underbrace{\span\set{e(\vec p^*)}}_{=N(\vec B)}.
	\]
	Now, let any $\vec x\in N(\vec B)$ be given. We can project $\vec x$ on the
	spaces $W_1$ and $W_{n-2}^\bot$. Since the space $W:=W_1\oplus W_{n-2}^\bot$
	is also over $\R$ and has dimension $n-1$, we have the isomorphy
	\[
	W_1\oplus W_{n-2}^\bot\simeq\R^n\slash\sim,
	\]
	so that the function $b$ is again a norm on $W$. Now, let us take the 1-norm
	(an arbitrary choice here) to define another norm on $W$ as
	\[
	\norm{\vec x}_W := \norm{\proj_{W_1}(\vec x)}_1 + \norm{\proj_{W_{n-2}^\bot}(\vec x)}_1.
	\]
	Since all norms over $\R^d$ are equivalent by Theorem
	\ref{thm:equivalence-of-norms} (for all $d$, especially $d=n$ or $d=n-1$),
	there is a constant $\alpha>0$ such that $\alpha\cdot\norm{\vec x}_W<b(\vec
	x)$. By definition of $\norm{\vec x}_W$, we also have $\alpha\cdot\norm{\vec
		x}_1\leq \alpha\cdot\norm{\vec x}_W\leq b(\vec x)$. This lets us define a
	norm on the subspace $W_1\subset W$ as
	\[
	\norm{\vec x}_{W_1} := \frac\alpha 2\cdot\norm{\proj_{W_1}(\vec x)}_1,
	\]
	which satisfies the desired inequality $\norm{\vec x}_{W_1}\leq \frac 1 2
	b(\vec x)$.
	
	Now, let us put together the pieces: define the sought norm $\norm{\cdot}_e$
	as
	\[
	\norm{\vec x}_e:=\norm{\vec x}_V + \norm{\vec x}_{W_1}\leq \frac 1 2 b(\vec
	x) + \frac 1 2 b(\vec x)=b(\vec x),
	\]
	where the inequality is only demanded to hold for $\vec x\notin N(\vec B)$.
	Observe that this is indeed a (full) norm on $\R^n$, since:
	\begin{itemize}
		\item if $\vec x=0$, then $\norm{\vec x}_V=\norm{\vec x}_{W_1}=0$
		\item if $\vec x\neq 0$ and $\vec x\notin N(\vec B)$, then there is a
		nonzero projection $\vec x_V$ on the orthogonal complement of $N(\vec
		B)$, on which $\norm{\vec x_V}_V>0$, and hence $\norm{\vec x}_e>0$.
		Likewise, if $\vec x\neq 0$ and $\vec x\in N(\vec B)$ ($\iff \vec
		x\notin N(\vec B)^\bot$), then there is a nonzero projection on
		$W_1$, making the other part of the norm $>0$.
		\item Homogeneity and the triangle inequality hold by construction and
		are obvious to check.
	\end{itemize}
	Substituting this into \eqref{eqn:b-norm}, we finally get
	\begin{dmath*}
	\norm{e(\vec p)-e(\vec p^*)}_e\leq \norm{e(\vec p)}_e + \norm{e(\vec p^*)}_e \leq b(e(\vec p)) + b(e(\vec p^*)) = b(e(\vec p)),
	\end{dmath*}
	thus satisfying \eqref{eqn:norm-bound}, and yielding the final norm from
	\eqref{eqn:b-norm} as
	\[
	\norm{\vec x} = \frac 3 2 b(\vec x) + \norm{\vec x}_{W_1}.
	\]
    This completes the proof of Theorem \ref{thm:main}. So far, this argument
    is not entirely constructive, but can be made so by reconsidering the
    construction in a little more detail, to which we devote the next
    paragraph.

\subsubsection{Computing the Projections and the Value
	$\alpha$}\label{sec:practicalities} As stated, the proof of Theorem
\ref{thm:main} is not constructive at the point where it claims the
\emph{existence} of the constant $\alpha$ to make $\alpha\cdot\norm{\vec
	x}_{W}\leq b(\vec x)$. Working out a suitable constant $\alpha$ explicitly is
not difficult: every $\vec x\in W_1=\span(\vec w_1)$ takes the form $\vec
x=\lambda\cdot \vec w_1$ for some $\lambda\in\R$, and we can, w.l.o.g.,
assume $\vec w_1$ to have unit length w.r.t. $\norm{\cdot}_1$ on $\R^n$.
Then, $\norm{\proj_{W_1}(\vec x)}_1=\abs{\lambda}$, and $b(\vec
x)=b(\lambda\cdot \vec w_1)=\abs{\lambda}\cdot b(\vec w_1)$. So, it suffices
to choose any $\alpha\in(0,b(\vec w_1))$ to accomplish
$\alpha\cdot\norm{\proj_{W_1}(\vec x)}_1<b(\vec x)$ for $\vec x\in W_1$, as
 desired. If $\vec x\in\R^n$ is arbitrary, its projection is
directly obtained from the standard scalar product $\proj_{W_1}(\vec
x)=\left\langle \vec x,\vec w_1\right\rangle\cdot \vec w_1=(\vec
x^{\top}\cdot \vec w_1)\cdot \vec w_1$ with $\lambda=\left\langle \vec x,\vec
w_1\right\rangle$.

Computing the projection of a vector $\vec x\in\R^n$ on the subspace $V$ is
simply the mapping $\vec x\mapsto\vec B\cdot \vec x$, if $\vec B$ is
constructed as Lemma \ref{lem:b-matrix} prescribes.

Putting together the pieces, given the parameter set $\vec p^*$ and the
resulting residual error vector $\vec e$, the norm as told by Theorem
\ref{thm:main} is explicitly computable along the steps summarized in Figure
\ref{fig:norm-evaluation}.

\begin{figure}
	\begin{mdframed}
		\underline{Input}: Let $\vec e=f(\vec x,\vec p^*)-\vec y\in\R^n$ be the error vector of the ML model $f$
		using the parameters $\vec p^*$, on the training/validation data $(\vec x,\vec y)$.
		
		\underline{Output}: The norm that Theorem \ref{thm:main} speaks about.
		\begin{enumerate}
			\item Compute $\vec B$ as shown in the proof of Lemma \ref{lem:b-matrix}.
			\item Pick a random vector $\vec w_1\in\R^n$ with $\norm{\vec w_1}_1=1$.
			With probability 1, this will deliver a vector that is linearly
			independent of all rows in $\vec B$, and also not a scalar multiple
			of $\vec e$ (but this should nonetheless be checked by checking if the $\vec w_1 \neq \vec{B} \cdot \vec{w}_1$ is fulfilled. Otherwise
			sample another vector $\vec w_1$ and repeat). The probability assurance follows from the
			fact that any lower-dimensional subspace of $\R^n$ has zero Lebesgue
			measure in $\R^n$.
			\item Put $\alpha := \frac 1 2\cdot b(\vec w_1)$, with the function $b$
			defined from the matrix $\vec B$ via \eqref{eqn:seminorm}.
			\item Given any vector $\vec x\in\R^n$, compute the norm $\norm{\vec
				x}_e=\norm{\vec x}_V + \norm{\vec x}_{W_1}$, utilizing that $\norm{\vec
				x}_V=\norm{\proj_V(\vec x)}:=\frac 1 2\cdot b(\vec x)$, and $\norm{\vec
				x}_{W_1}=\frac\alpha 2\cdot\abs{\vec x^{\top}\vec w_1}$, to obtain $\norm{\vec x}$ from \eqref{eqn:b-norm} as
			\begin{equation}\label{eqn:norm-explicit}
				\norm{\vec x} = \frac 3 2 b(\vec x) + \frac\alpha 2\cdot\abs{\vec x^{\top}\cdot\vec w_1}\\
			\end{equation}
		\end{enumerate}
	\end{mdframed}
	\caption{Computation of the norm asserted by Theorem \ref{thm:main}}\label{fig:norm-evaluation}
\end{figure}
	

\subsection{Proof of Corollary \ref{cor:MAE}}\label{apx:proof:cor:MAE}
A re-inspection of the proof of Theorem \ref{thm:main} in Section
\ref{sec:proof:thm:main} quickly shows that it nowhere depends on the
algebraic structure of the function $b$ as given by \eqref{eqn:seminorm}, and
we only used the fact that $b$ is a semi-norm. With that in mind, we can
investigate special cases:


	Define $b$ as
	\begin{equation}\label{eqn:b-alternative}
		b(\vec x) := \norm{\vec B\cdot \vec x}_1,
	\end{equation}
	which has the kernel $N(\vec B)$, and is also a semi-norm. However, it lets
	us express the final norm that Theorem \ref{thm:main} concludes with by a
	more elegant algebraic expression. Upon re-arriving at
	\eqref{eqn:norm-explicit} (see Figure \ref{fig:norm-evaluation}) using the
	function $b$ as defined by \eqref{eqn:b-alternative}, we can expand towards
	\[
	\norm{\vec x}=\frac 3 2\norm{\vec B \vec x}_1+\frac{\alpha}2\abs{\vec x^\top \vec w_1},
	\]
	and, recalling that adding the right term to the 1-norm on the left is the
	same as taking the 1-norm on a vector with merely one additional coordinate,
	we see with a block matrix $\vec C=\binom{(3/2)\cdot\vec B}{(\alpha/2)\cdot
		\vec w_1^\top}$
	\begin{dmath}
	\norm{\vec C\cdot \vec x
	}_1
	=
	\norm{\left(
		\begin{array}{c}
			\frac 3 2 \vec B\cdot \vec x \\
			\frac \alpha 2 \vec w_1^\top\cdot\vec  x \\
		\end{array}
		\right)
	}_1
	= \frac 3 2\norm{\vec B \vec x}_1 + \frac{\alpha}2\abs{\vec x^\top \vec w_1} = \norm{\vec x},
	\end{dmath}
	so that $\norm{\vec e}=\norm{\vec C\cdot \vec e}_1=n\cdot MAE(\vec C\cdot
	\vec e)$ on the error $\vec e$.

This means that the error measured by the norm from Theorem \ref{thm:main} is
\qq{just} the \emph{mean absolute error}, except for a linear transformation
of the error vector. Contemporary machine learning libraries often provide the possibility to define custom loss functions, such as, e.g., \texttt{keras} \cite{keras_team_keras_2020}.


\subsection{Proof of Corollary \ref{cor:main}}\label{apx:proof:cor:main}

	If $f$ is vector-valued with $k$ coordinates, we can apply Theorem \ref{thm:main} to
	each coordinate function $f_j$ for $j=1,\ldots,k$ to obtain a vector norm $\norm{\cdot}_{e_j}$ on $\R^N$ that depends on
	$\vec e_j(\vec p^*)$ and satisfies
	\begin{equation}\label{eqn:norm-optimality}
		\norm{\vec e_j(\vec p^*)}_{e_j}\leq \norm{\vec e_j(\vec p)}_{e_j}
	\end{equation}
	for the parameterization $\vec p^*$ that is the same for all $k$, and all
	$\vec p$ in a neighborhood of $\vec p^*$. From these vector norms, we can define
	\begin{equation}\label{eqn:sum-norm}
		\norm{\vec A}=\sum_{j=1}^{k}\norm{\vec a_j}_{e_j},
	\end{equation}
	with $\vec a_j$ being the $j$-th column in the matrix $\vec A$. This is readily checked to be a matrix-norm, but now works on the
	multivariate error $\vec E=(\vec e_1(\vec p),\ldots,\vec e_k(\vec p))$. The optimality of
	$\vec p^*$ under this norm then directly follows by summing up
	\eqref{eqn:norm-optimality} over $j=1,2,\ldots,k$. This completes the proof.

The practical evaluation of the norm in the multivariate case thus boils down
to an $k$-fold evaluation of norms from Theorem \ref{thm:main} using the
algorithm from Figure \ref{fig:norm-evaluation}, and summing up the results.
Since all matrix norms are likewise to Theorem \ref{thm:equivalence-of-norms}
equivalent, the previous remarks on the freedom to choose any matrix norm for
fitting the \ac{ML} model remains valid.


\section{Example: Regression Model}\label{apx:numerical-example-full} Let us
first illustrate the application of Theorem \ref{thm:main} on a simple linear
regression model. This choice is convenient for both, a closed-form
expressibility of objects like the Jacobian, as well as it can be designed
with only a few number of parameters for a manual check that the resulting
model really comes up almost identical, whether it has been trained with real
or decoy data.

 The overall experiment went as follows, where we let
the data hereafter be \emph{purely artificial} for the mere sake of easy
visual inspection during the computations and in particular regarding the
results:

\begin{enumerate}
	\item The overall regression model is given by a function with parameter
	$\vec p=(\beta_0,\beta_1,\ldots,\beta_{d-1})$
	\begin{equation}\label{eqn:training-model}
		f(\vec x,\vec p)=\beta_0 + \beta_1\cdot x_1+\beta_2\cdot x_2+\ldots+\beta_{d-1}\cdot x_{d-1}+\eps,
	\end{equation}
	in which $\eps$ is a random error term with assumed zero mean. From the
	model, it is evident that $d=m+1$, so that the input vector $\vec
	x\in\R^m$ has one dimension less than $\vec p$. For the experiment, we
	took a uniformly random vector $\vec p\in[-6,+6]^d$ of reals, to define
	an incoming model $f_0$ ``at random''. The magnitude $\pm 6$ is herein an
	arbitrary choice, to keep the numbers feasibly small for a manual visual
	inspection later.
	
	\item Equation \eqref{eqn:training-model} was then evaluated on a total
	of $n=10$ uniformly random samples $\vec
	X_i\sim\mathcal{U}(\set{1,2,\ldots,8}^m)$, adding stochastically
	independent error terms $\eps$, each with an exponential distribution
	with rate parameter $\lambda=5$ (to, say, let the data be
	inter-arrival times, with an eye back on Example
	\ref{exa:social-network}). Again, the choice of $x$-values in the
	integer range $1,\ldots, 8$ is arbitrary, and only to keep the
	numbers small for a visual checkup. This computation delivers the
	values $y_i\gets f(\vec x_i)+\eps$ for $i=1,2,\ldots,10$, which,
	together with the $\vec x_i$ form the \emph{training data}.
	\item Next, we ``forget'' about the underlying model (that we know here)
	and fit a regression model of the same structure, given only the
	training data. Since this data originally came out of a regression
	model, this lets us expect a quite good fit, and an approximate
	re-discovery of the same parameter vector $\hat{\vec p}$ as we had
	for producing the training data. Deviations are equally natural (yet
	at small scale), since the training data is not overly extensive.
	
	The resulting model $f_0$ is obtained by invoking a nonlinear
	optimization via a call to \verb|nonlin_min|, to minimize the functional
	$\norm{(f(\vec x_i,\vec p)-y_i)_{i=1}^{10}}_2$ using vectorization in
	GNU Octave. The minimization using the 2-norm has, in our case, the appeal of
	making the resulting model a best linear unbiased estimator by the
	Gauss-Markov theorem, whose hypotheses are here satisfied by
	construction. Thus, the trained model $f_0$ is indeed a ``good'' \ac{ML}
	model, as could be expected in real-life applications.
	\item Now, for a plausible denial, we took a fresh set of (stochastically
	independent) samples of \emph{decoy} training data $\vec
	X'_i\sim\mathcal{U}(\set{1,\ldots,8}^m)$, \emph{and} another set of
	random, and hence unrelated, response values $\vec
	Y_i'\sim\mathcal{U}(\set{1,\ldots,8}^m)$. Two things are important
	to note here:
	\begin{itemize}
		\item The decoy data is picked stochastically independent and at
		random, so the experiment was repeatable with different
		instances of all ingredients (only retaining fixed numeric
		ranges for the values),
		\item and, more importantly, the response values $y_i$ are
		\emph{independent} of the inputs $x_i$, so any underlying
		functional relation between $\vec x_i$ and the corresponding
		$y_i$ is most likely not a linear regression model. Thus, the
		decoy data is completely different from the true training
		data.
	\end{itemize}
	\item Given the set of decoy samples $(\vec x_i,y_i)_{i=1}^{10}$, we
	proceed by implementing the steps as shown in Figure
	\ref{fig:norm-evaluation}, producing the GNU Octave local variables
	\texttt{B}, \texttt{w1} corresponding to $\vec B$ and $\vec w_1$ from
	the text, and implementing the norm that Theorem \ref{thm:main}
	constructs as a function \verb|crafted_norm|. All these computations
	take less than 10 lines of code\footnote{In Octave only, but a port
		to Python or other languages is not expected to become considerably
		more complex.}.
	
	For checking the hypothesis of Lemma
	\ref{lem:local-optimality}, i.e., the rank condition
	\eqref{eqn:rank-condition}, the regression model comes in handy once more:
	it allows for a closed form expression of the Jacobian at $\vec p$, given
	directly by the data matrix, augmented with a mere column of all 1es,
	i.e., for our model $f(\vec
	x,(\beta_0,\ldots,\beta_{d-1}))=\beta_0+(\beta_1,\ldots,\beta_{d-1})\cdot\vec
	x$, we find the Jacobian to be constant\footnote{More complex models
		would require a manual approximation of the Jacobian (unless analytic
		expressions are obtainable), but this amounts to nested for loop over
		$i=1\ldots n$ and over $j=1\ldots d$ to approximate the derivative
		$\partial f_i/\partial p_j\approx \frac 1 h\cdot (f(\vec x_i,\vec
		p+h\cdot \vec u_j)-f(\vec x_i,\vec p)$, in which $\vec u_j$ is the $j$-th
		unit vector, and $h>0$ is some (very) small constant. This requires the
		\ac{ML} model, as a programming object, has access routines to get and
		set the model parameters as we wish (the regression model is again
		convenient here, since it is easy to implement).}, and given as
	\[
	\vec J = \left(
	\begin{array}{cc}
		1 & \vec x_1 \\
		1 & \vec x_2 \\
		\vdots & \vdots \\
		1 & \vec x_n \\
	\end{array}
	\right),
	\]
	in which each row $\vec x_i$ is the $i$-th data sample used to train the
	model. This is the matrix against we check the rank to change when
	attaching the vector $\vec e$.
	\item With these items, we then go back into the nonlinear optimization,
	again using the same function \verb|nonlin_min|, but this time
	minimizing our designed norm implemented in the function
	\verb|crafted_norm|, and formally found as Figure
	\ref{fig:norm-evaluation} tells.
\end{enumerate}
The results, quite satisfyingly, demonstrated that the model fitted to the
decoy data but using the specially constructed norm comes up approximately
equal to the original model. Notably, it does so with the decoy data having
no relation to the training data whatsoever, not even necessarily sharing its
original distribution (the original data was a linear combination of uniform
distributions, which is no longer uniform for two or more terms, while the
decoy data had an overall uniform distribution). The numeric discrepancies
between the newly fitted model and the original model can partly be
attributed to our lack of fine-tuning in the optimization process; indeed, we
invoked \verb|nonlin_min| with all \emph{default} settings, except for the
starting point to be inside a neighborhood of the given parameter vector
$\vec p$, known from the given model $f_0$. Indeed, even in the default
configuration, the model fitted under the true and the decoy data came up
quite ``close'' to each other, indicating potentially higher accuracy upon
careful fine-tuning of the optimization. In addition, the choice of $\vec
w_1$ may also have an impact on the numeric behavior of the optimizer, as
does any randomness that the optimization algorithm may employ internally. We
leave both possibilities for numeric accuracy gains aside here, leaving the
demonstration with the pointer towards the observation that higher
dimensionality of the model (and we conducted further experiments with larger
values for $d$) made the approximation worse. Again, this is not unexpected
in light of higher-dimensional optimization problems generally behaving less
nice than lower-dimensional ones. Our choice of $d=6$, however, makes a
manual check of equality among 6 pairs of model parameters quick and simple
to show in Section \ref{sec:evaluation}.


\section{A ``Cryptographic'' View}

The flow in Figure \ref{fig:plausible-deniability-experiment} resembles an
analogous situation as for probabilistic encryption, where the norm is
\emph{playing the role of a random auxiliary input} to the encryption
function: let $E_{pk}(m_0,\omega)$ denote the probabilistic encryption of a
message $m_0$ under a public key $pk$ and a random string (random coins)
$\omega$. Given a ciphertext $c$, one could deny the validity of any proposed
plaintext $m_1$ if $\forall c~\exists m,\omega: E_{pk}(m, \omega)=c$. This is
indeed the case for ElGamal encryption (for example). This is the common way
of defining security of encryption (see any of the standard cryptography
textbooks), and our notion of plausible deniability is completely analogue to
this.

\begin{figure}
	\centering
	\includegraphics[scale=0.7]{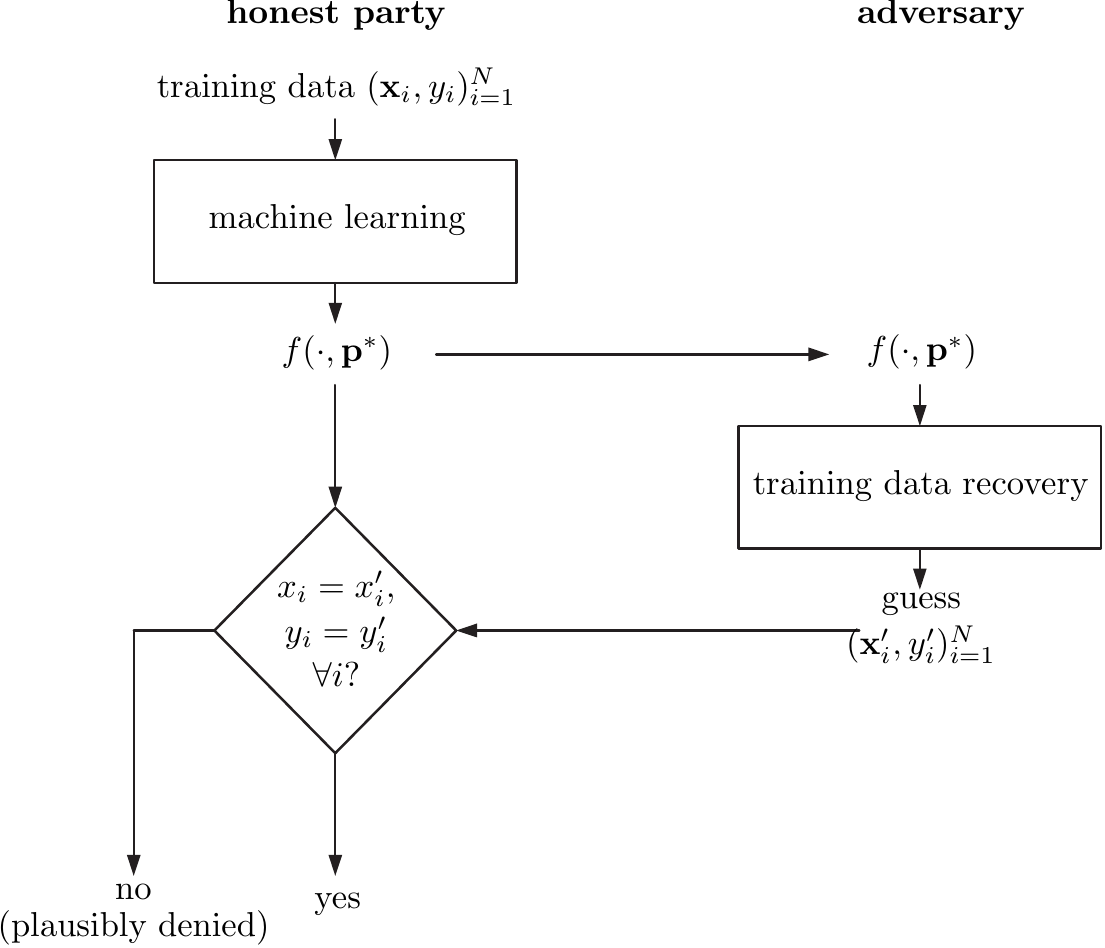}
	\caption{Plausible Deniability Experiment}\label{fig:plausible-deniability-experiment}
\end{figure}

\begin{acronym}
	\acro{ML}{machine learning}%
	\acro{AI}{artificial intelligence}%
	\acro{NN}{neural network}%
	\acro{SVD}{Singular Value Decomposition}%
	\acro{GAN}{Generative Adversarial Networks}%
	\acro{PUF}{Physically Uncloneable Features}%
	\acro{GDPR}{General Data Protection Regulation}%
\end{acronym}

\end{document}